\newcommand{\newterm}[1]{{\bf #1}}
\def\Figref#1{Figure~\ref{#1}}
\def\Secref#1{Section~\ref{#1}}
\def\eqref#1{equation~\ref{#1}}
\def\Eqref#1{Equation~\ref{#1}}
\def\Lemref#1{Lemma~\ref{#1}}
\def\Thmref#1{Theorem~\ref{#1}}
\def\floor#1{\lfloor #1 \rfloor}
\def\avg#1{\langle #1 \rangle}
\def\1{\bm{1}}
\def\vzero{{\bm{0}}}
\def\vone{{\bm{1}}}
\def\va{{\bm{a}}}
\def\vr{{\bm{r}}}
\def\vv{{\bm{v}}}
\def\vw{{\bm{w}}}
\def\vx{{\bm{x}}}
\def\mB{{\bm{B}}}
\def\mI{{\bm{I}}}
\def\mP{{\bm{P}}}
\def\mR{{\bm{R}}}
\def\mV{{\bm{V}}}
\DeclareMathAlphabet{\mathsfit}{\encodingdefault}{\sfdefault}{m}{sl}
\SetMathAlphabet{\mathsfit}{bold}{\encodingdefault}{\sfdefault}{bx}{n}
\def\sC{{\mathbb{C}}}
\def\sR{{\mathbb{R}}}
\DeclareMathOperator{\Tr}{Tr}
\DeclareMathOperator{\vspan}{span}
\DeclareMathOperator{\range}{range}
\newtheorem{theorem}{Theorem}
\newtheorem{corollary}{Corollary}
\newtheorem{lemma}{Lemma}
\newtheorem{proposition}{Proposition}
\DeclareMathOperator{\gl}{GL}
\newcommand{\cyc}[1]{Z_{#1}}
\author{Matthew Farrell\thanks{These authors contributed equally.}\text{  \,}$^{\dagger}$$^{\ddagger}$, \text{ } Blake Bordelon$^{*}$$^{\dagger}$$^{\ddagger}$, \text{ } Shubhendu Trivedi$^{\S}$, \& Cengiz Pehlevan$^{\dagger}$$^{\ddagger}$  \\
\\
$^{\dagger}$John A. Paulson School of Engineering and Applied Sciences\\
$^{\ddagger}$Center for Brain Science\\
\, Harvard University\\
\, Cambridge, MA 02138, USA \\
\,\texttt{\{msfarrell,blake\_bordelon,cpehlevan\}@seas.harvard.edu} \\
\\
$^{\S}$Massachusetts Institute of Technology \\
\, Cambridge, MA 02139, USA \\
\, \texttt{shubhendu@csail.mit.edu}
}
\title{Capacity of Group-invariant Linear Readouts from Equivariant Representations: How Many Objects can be Linearly Classified Under All Possible Views?}
\begin{document}
\maketitle
\begin{abstract}

Equivariance has emerged as a desirable property of representations of objects subject to identity-preserving transformations that constitute a group, such as translations and rotations. However, the expressivity of a representation constrained by group equivariance is still not fully understood. We address this gap by providing a generalization of Cover's Function Counting Theorem that quantifies the number of linearly separable and group-invariant binary dichotomies that can be assigned to equivariant representations of objects. We find that the fraction of separable dichotomies is determined by the dimension of the space that is fixed by the group action. We show how this relation extends to operations such as convolutions, element-wise nonlinearities, and global and local pooling. While other operations do not change the fraction of separable dichotomies, local pooling decreases the fraction, despite being a highly nonlinear operation. Finally, we test our theory on intermediate representations of randomly initialized and fully trained convolutional neural networks and find perfect agreement.

    
\end{abstract}

\section{Introduction}
\vspace{-.1in}

The ability to robustly categorize objects under conditions and transformations that preserve the object categories
is essential to animal intelligence, and to pursuits of practical importance such as improving computer vision systems.
However, for general-purpose understanding and geometric reasoning, invariant representations of these objects in sensory processing circuits are not enough.
Perceptual representations must also accurately encode their transformation properties. 

One such property is that of exhibiting equivariance to transformations of the object.
When such transformations are restricted to be an algebraic 
group, 
the resulting equivariant representations have found significant success in machine learning starting with classical convolutional neural networks (CNNs)~\citep{NIPS1988_a97da629,LeCun1989BackpropagationAT} and recently being generalized by the influential work of~\citet{Cohen2016GroupEC}.
Such representations have elicited burgeoning interest as they capture many transformations of practical interest such as translations, permutations, rotations, and reflections. Furthermore, equivariance to these transformations can be easily ``hard-coded'' into neural networks. Indeed, a new breed of CNN architectures that explicitly account for such transformations are seeing diverse and rapidly growing applications ~\citep{Townshend2021GeometricDL,Baek2021AccuratePO, Satorras2021EnEN, andersonCormorantCovariantMolecular2019, BogatskiyLorentz2020, Klicpera2020DirectionalMP, Winkels2019PulmonaryND, Gordon2020PermutationEM, Sosnovik2021ScaleEI, Eismann2020HierarchicalRN}.
In addition, equivariant CNNs have been shown to capture response properties of neurons in the primary visual cortex beyond classical  G\'abor filter models \citep{ecker2018rotation}. 


While it is clear that equivariance imposes a strong constraint on the geometry of representations and thus of perceptual manifolds~\citep{seung2000manifold,dicarlo2007untangling} that are carved out by these representations as the objects transform, the implications of such constraints on their expressivity are not well understood. In this work we take a step towards addressing this gap.
Our starting point is the classical notion of the \newterm{perceptron capacity} (sometimes also known as the fractional memory/storage capacity) -- a quantity fundamental to the task of object categorization and closely related to VC dimension~\citep{Vapnik1971ChervonenkisOT}.
Defined as the maximum number of points for which all (or 1-$\delta$ fraction of) possible binary label assignments (i.e. \newterm{dichotomies}) afford a hyperplane that separates points with one label from the points with the other,
it can be seen to offer a quantification of the expressivity of a representation.

Classical work on perceptron capacity focused on points in general position  \citep{Wendel1962API, Cover1965GeometricalAS, Schlafli1950,Gardner1987MaximumSC, Gardner1988TheSO}. However, understanding the perceptron capacity when the inputs are not merely points, but are endowed with richer structure, has only recently started to attract attention.
For instance, work by ~\citet{chungClassificationGeometryGeneral2018, pastoreStatisticalLearningTheory2020, rotondoCountingLearnableFunctions2020, cohenSeparabilityGeometryObject2020} considered general perceptual manifolds and examined the role of their geometry to obtain extensions to the perceptron capacity results. However, such work crucially relied on the assumption that each manifold is oriented randomly, a condition which is strongly violated by equivariant representations.

With these motivations, our particular contributions in this paper are the following: 
\begin{itemize}[leftmargin=*]
\item We extend Cover's function counting theorem and VC dimension to equivariant representations, finding
that both scale with the dimension of the subspace fixed by the group action.
    
    
    \item We demonstrate the applicability of our result to $G$-convolutional network layers, including pooling layers, through theory and verify through simulation.

\end{itemize}

\subsection{Related works}
\vspace{-.1in}

Work most related to ours falls along two major axes. The first follows the classical perceptron capacity result on the linear separability of points ~\citep{Schlafli1950,Wendel1962API,Cover1965GeometricalAS, Gardner1987MaximumSC, Gardner1988TheSO}. This result initiated a long history of investigation in theoretical neuroscience, e.g. ~\citep{brunel2004optimal, Chapeton2012EfficientAM, rigotti2013importance, Brunel2016IsCC, Rubin2017BalancedEA, Pehlevan2017ResourceefficientPH}, where it is used to understand the memory capacity of neuronal architectures. Similarly, in machine learning, the perceptron capacity and its variants, including notions for multilayer perceptrons, have been fundamental to a fruitful line of study in the context of finite sample expressivity and generalization ~\citep{Baum1988OnTC,Kowalczyk1997EstimatesOS, Sontag1997ShatteringAS, Huang2003LearningCA, Yun2019SmallRN, Vershynin2020MemoryCO}. Work closest in spirit to ours comes from theoretical neuroscience and statistical physics ~\citep{chungClassificationGeometryGeneral2018, pastoreStatisticalLearningTheory2020, rotondoCountingLearnableFunctions2020, cohenSeparabilityGeometryObject2020}, which considered general perceptual manifolds, albeit oriented randomly, and examined the role of their geometry to obtain extensions to the perceptron capacity result. 

The second line of relevant literature is that on group-equivariant convolutional neural networks (GCNNs). The main inspiration for such networks comes from the spectacular success of classical CNNs ~\citep{LeCun1989BackpropagationAT} which directly built in translational symmetry into the network architecture. In particular, the internal representations of a CNN are approximately\footnote{Some operations such as max pooling and boundary effects of the convolutions technically break strict equivariance, as well as the final densely connected layers.} translation equivariant: if the input image is translated by an amount $t$, the feature map of each internal layer is translated by the same amount. Furthermore, an invariant read-out on top ensures that a CNN is translation invariant. ~\citet{Cohen2016GroupEC} observed that a viable approach to generalize CNNs to other data types could involve considering equivariance to more general transformation groups. This idea has been used to construct networks equivariant to a wide variety of transformations such as planar rotations~\citep{Worrall2017HarmonicND, Weiler2018LearningSF, Bekkers2018RotoTranslationCC, Veeling2018RotationEC, Smets2020PDEbasedGE}, 3D rotations~\citep{Cohen2018SphericalC,Esteves2018LearningSE, Worrall2018CubeNetET, Weiler20183DSC, Kondor2018ClebschGordanNA, Perraudin2019DeepSphereES}, permutations~\citep{Zaheer2017DeepS, Hartford2018DeepMO, Kondor2018CovariantCN, Maron2019InvariantAE, Maron2020OnLS}, general Euclidean isometries~\citep{Weiler20183DSC, Weiler2019GeneralES, Finzi2020GeneralizingCN}, scaling~\citep{Marcos2018ScaleEI, Worrall2019DeepSE, Sosnovik2020ScaleEquivariantSN} and more exotic symmetries~\citep{BogatskiyLorentz2020, Shutty2020LearningIR, Finzi2021APM} etc.

A quite general theory of equivariant/invariant networks has also emerged. ~\citet{Kondor2018OnTG} gave a complete description of GCNNs for scalar fields on homogeneous spaces of compact groups. This was generalized further to cover the steerable case in~\citep{cohenGeneralTheoryEquivariant2019} and to general gauge fields in~\citep{Cohen2019GaugeEC, Weiler2021CoordinateIC}. This theory also includes universal approximation results~\citep{Yarotsky2018UniversalAO, KerivenUniversal2019, Sannai2019UniversalAO, maron19a, Segol2020OnUE, ravanbakhsh20a}. Nevertheless, while benefits of equivariance/invariance in terms of improved sample complexity and ease of training are quoted frequently, a firm theoretical understanding is still largely missing.
Some results however do exist, going back to~\citep{ShaweTaylor1991ThresholdNL}. ~\citet{MostafaHints} made the argument that restricting a classifier to be invariant can not increase its VC dimension. ~\citet{sokolic17a} extend this idea to derive generalization bounds for invariant classifiers, while ~\citet{Sannai2019ImprovedGB} do so specifically working with the permutation group. ~\citet{ElesedyGeneralization} show a strict generalization benefit for equivariant linear models, showing that the generalization gap between a least squares model and its equivariant version depends on the dimension of the space of anti-symmetric linear maps. Some benefits of related ideas such as data augmentation and invariant averaging are formally shown in~\citep{Lyle2020OnTB, ChenAugmentation}.
Here we focus on the limits to expressivity enforced by equivariance.



\section{Problem formulation}\label{sec:ProblemForm}
\vspace{-.1in}


Suppose $\vx$ abstractly represents an object and let $\vr(\vx) \in \mathbb{R}^N$ be some feature map of $\vx$ to an $N$-dimensional space (such as an intermediate layer of a deep neural network). 
We consider transformations of this object, such that they form a group in the algebraic sense of the word. 
We denote the abstract transformation of $\vx$ by element $g \in G$ as $g\vx$. Groups $G$ may be represented by invertible matrices, which act on a vector space $V$ (which themselves form the group $GL(V)$ of invertible linear transformations on $V$). We are interested in feature maps $\vr$ which satisfy the following group equivariance condition:
\vspace{-.1in}
\begin{align*}
    \vr( g \vx) = \pi(g) \vr(\vx),
\end{align*}
where $\pi : G \to GL(\mathbb{R}^N)$ is a linear \newterm{representation} of $G$ which acts on feature map $\vr(\vx)$. Note that many representations of $G$ are possible, including the trivial representation: $\pi(g)= \bm I$ for all $g$.

We are interested in perceptual object manifolds generated by the actions of $G$.
Each of the $P$ manifolds can be written as a set of points $\{\pi(g) \vr ^ \mu: g\in G\}$ where $\mu \in [P] \equiv \{1,2,\dots,P\}$; that is, these manifolds are orbits of the point $\vr^{\mu} \equiv \vr (\vx^{\mu})$ under the action of $\pi$.
We will refer to such manifolds as \newterm{$\pi$-manifolds}.\footnote{
Note that for a finite group $G$, each manifold will consist of a finite set of points.
This violates the technical mathematical definition of ``manifold'', but we abuse the definition here for the sake of consistency with related work~\citep{chungClassificationGeometryGeneral2018,cohenGeneralTheoryEquivariant2019}; to be more mathematically precise, one could instead refer to these ``manifolds'' as $\pi$-orbits.}

Each of these $\pi$-manifolds represents a single object under the transformation encoded by $\pi$; hence, each of the points in a $\pi$-manifold is assigned the same class label.
A perceptron endowed with a set of linear readout weights $\vw$ will attempt to determine the correct class of every point in every manifold.
The condition for realizing (i.e. linearly separating) the dichotomy $\{y^{\mu}\}_{\mu}$ can be written as $y^{\mu}\vw^{\top}\pi(g)\vr^{\mu} > 0$ for all $g\in G$ and $\mu \in [P]$, where $y^{\mu} = +1$ if the $\mu$\textsuperscript{th} manifold belongs to the first class and $y^{\mu} = -1$ if the $\mu$\textsuperscript{th} manifold belongs to the second class.
The perceptron capacity is the fraction of dichotomies that can be linearly separated; that is, separated by a hyperplane.

For concreteness, one might imagine that each of the $\vr^\mu$ is the neural representation for an image of a dog (if $y^{\mu} = +1$) or of a cat (if $y^{\mu} = -1$).
The action $\pi(g)$ could, for instance, correspond to the image shifting to the left or right, where the size of the shift is given by $g$.
Different representations of even the same group can have different coding properties, an important point for investigating biological circuits and one that we leverage to construct a new GCNN architecture in \Secref{sec:gcnns}.

\section{Perceptron Capacity of Group-Generated Manifolds}\label{sec:perc-capacity}
\vspace{-.1in}
Here we first state and prove our results in the general case of representations of compact groups over $\sR^N$.
The reader is encouraged to read the proofs, as they are relatively simple and provide valuable intuition.
We consider applications to specific group representations and GCNNs in Sections \ref{sec:ex-cyclic} and \ref{sec:gcnns} that follow.

\subsection{Separability of \texorpdfstring{$\pi$}{π}-manifolds}\label{sec:cap-red}
\vspace{-.1in}


We begin with a lemma which states that classifying the $P$ $\pi$-manifolds can be reduced to the problem of classifying their $P$ centroids. For the rest of \Secref{sec:perc-capacity}, we let $\pi: G \to \gl (\sR^N)$ be an arbitrary linear representation of a compact group $G$.\footnote{Note that our results extend to more general vector spaces than $\sR^N$, under the condition that the group be semi-simple.}
We denote the average of $\pi$ over $G$ with respect to the Haar measure by $\avg{\pi(g)}_{g\in G}$; for finite $G$ this is simply $\frac{1}{|G|}\sum_{g\in G} \pi (g)$ where $|G|$ is the order (i.e. number of elements) of $G$.
For ease of notation we will generally write $\langle \pi \rangle \equiv \langle \pi (g) \rangle_{g\in G}$ when the group $G$ being averaged over is clear.
\begin{lemma}\label{thm:lemma-1}
A dataset $\{( \pi(g) \vr^\mu, y^\mu) \}_{g\in G, \mu \in [P]}$ consisting of $P$ $\pi$-manifolds with labels $y^{\mu}$ is linearly separable if and only if the dataset $\{( \avg{\pi} \vr^\mu, y^\mu) \}_{\mu \in [P]}$ consisting of the $P$ centroids $\avg{\pi} \vr^\mu$ with the same labels is linearly separable. Formally,
\begin{align*}
    \exists \vw \ \forall g \in G, \mu \in [P] :  y^\mu \vw^\top \pi(g) \vr^\mu  > 0 \iff \exists \vw \ \forall \mu \in [P] :   y^\mu \vw^\top \avg{\pi} \vr^\mu > 0.
\end{align*}
\vspace{-.35in}
\end{lemma}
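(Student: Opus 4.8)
The plan is to prove the two directions separately, both hinging on the fact that $\avg{\pi}$ is the idempotent group-average of the representation. Before treating either direction I would record the identities $\avg{\pi}\,\pi(g) = \pi(g)\,\avg{\pi} = \avg{\pi}$ for every $g \in G$. These follow immediately from the bi-invariance of the Haar measure on a compact group: writing $\avg{\pi}\,\pi(g) = \langle \pi(h)\pi(g)\rangle_h = \langle \pi(hg)\rangle_h$ and substituting $h' = hg$ leaves the average unchanged. In particular $\avg{\pi}^2 = \avg{\pi}$, so $\avg{\pi}$ is a projection onto the subspace of group-fixed vectors, a structural fact that also underlies the capacity theorem to follow.

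For the forward ($\Rightarrow$) direction I would simply reuse the separating weight vector. If $\vw$ satisfies $y^\mu \vw^\top \pi(g)\vr^\mu > 0$ for every $g$ and $\mu$, then averaging this inequality over $g$ against the Haar measure yields $y^\mu \vw^\top \avg{\pi}\vr^\mu > 0$, so the very same $\vw$ separates the centroids. The only point requiring care is that the average of a strictly positive quantity is strictly positive: for finite $G$ this is immediate, and for a general compact $G$ the integrand $g\mapsto y^\mu \vw^\top \pi(g)\vr^\mu$ is continuous and strictly positive on a compact set, hence attains a positive minimum, so its integral is strictly positive.

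For the backward ($\Leftarrow$) direction the weight vector must be modified, and this is the step I expect to carry the real content. Given $\vw$ with $y^\mu \vw^\top \avg{\pi}\vr^\mu > 0$ for all $\mu$, I would set $\vw' = \avg{\pi}^\top \vw$, i.e. $\vw'^\top = \vw^\top \avg{\pi}$. Using the identity $\avg{\pi}\,\pi(g) = \avg{\pi}$ recorded above,
$$\vw'^\top \pi(g)\vr^\mu = \vw^\top \avg{\pi}\,\pi(g)\vr^\mu = \vw^\top \avg{\pi}\vr^\mu,$$
which is independent of $g$. Hence $y^\mu \vw'^\top \pi(g)\vr^\mu = y^\mu \vw^\top \avg{\pi}\vr^\mu > 0$ simultaneously for every $g$ and $\mu$, so $\vw'$ separates all the $\pi$-manifolds.

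The conceptual obstacle is precisely the symmetrization in the backward direction: one has to notice that projecting the readout weights onto the invariant subspace collapses the readout to a value that is constant along each orbit and equal to the readout on the centroid, which is exactly what promotes separability of the $P$ centroids to separability of the $P$ full manifolds. The only genuinely technical wrinkle is the strict-positivity-of-the-average argument in the forward direction for infinite compact groups, handled by the compactness and continuity remark above.
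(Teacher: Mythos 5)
Your proof is correct and follows essentially the same route as the paper's: the forward direction averages the separating inequalities over the Haar measure, and the backward direction symmetrizes the weights via $\tilde{\vw} = \avg{\pi}^\top \vw$ and uses the Haar-invariance identity $\avg{\pi}\,\pi(g) = \avg{\pi}$, which is exactly the paper's computation packaged as an upfront identity. Your explicit handling of strict positivity of the average for infinite compact groups (continuity plus compactness) is a technical detail the paper glosses over, but it refines rather than changes the argument.
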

\begin{proof}

The forward implication is obvious: if there exists a $\vw$ which linearly separates the $P$ manifolds according to an assignment of labels $y^{\mu}$, that same $\vw$ must necessarily separate the centroids of these manifolds. This can be seen by averaging each of the quantities $y^\mu \vw^\top \pi(g) \vr^\mu$ over $g \in G$. Since each of these quantities is positive, the average must be positive.

For the reverse implication, suppose $y^{\mu}\vw^{\top} \langle \pi \rangle \vr^{\mu}>0$, and define $\tilde{\vw} = \langle \pi \rangle^{\top} \vw$. We will show that $\tilde{\vw}$ separates the $P$ $\pi$-manfolds since 
\begin{align*}
    y^{\mu} \tilde{\vw}^{\top} \pi (g)\vr^{\mu}&=y^{\mu} \vw^{\top}\langle \pi \rangle \pi (g)\vr^{\mu} \nonumber \ \  \text{(Definition of $\tilde{\vw}$)} 
\\
&=y^{\mu} \vw^{\top}\langle \pi(g') \pi (g)\rangle_{g'\in G}\vr^{\mu} \nonumber \ \  \text{(Definition of $\left< \pi \right>$ and linearity of $\pi (g)$)} 
\\
&= y^{\mu} \vw^{\top}\langle \pi \rangle\vr^{\mu} \ \  \text{(Invariance of the Haar Measure $\mu(S g) = \mu(S)$ for set $S$)}  \nonumber
\\
&>0 \ \ \ \  \text{(Assumption that $\vw$ separates centroids)}
\end{align*}
Thus, all that is required to show that $\tilde{\vw}$ separates the $\pi$-orbits are basic properties of a group representation 
and invariance of the Haar measure to $G$-transformations.
\end{proof}

\subsection{Relationship with Cover's Theorem and VC dimension}
\vspace{-.1in}
The fraction $f$ of linearly separable dichotomies on a dataset of size $P$ for datapoints in general position
\footnote{A set of $P$ points is in \newterm{general position} in $N$-space if every subset of $N$ or fewer points is linearly independent.
This says that the points are ``generic'' in the sense that there aren't any prescribed special linear relationships between them beyond lying in an $N$-dimensional space. Points drawn from a Gaussian distribution with full-rank covariance are in general position with probability one.}
in $N$ dimensions was computed
by~\citet{Cover1965GeometricalAS}
and takes the form
\begin{equation}\label{CoverThm}
    f(P,N) = 2^{1-P} \sum_{k=0}^{N-1} 
    \binom{P-1}{k}
\end{equation}
where we take
$\binom{n}{m}=0$ for $m>n$.
Taking $\alpha = P/N$, $f$ is a nonincreasing sigmoidal function of $\alpha$ that takes the value 1 when $\alpha \leq 1$, 1/2 when $\alpha=2$, and approaches $0$ as $\alpha \to \infty$.
In the thermodynamic limit of $P, N \to \infty$ and $\alpha=O(1)$, $f(\alpha)$ converges to a step function
$
\label{GardnerThm}
\bar{f}(\alpha) = \lim_{P,N \to \infty, \alpha = P/N} f(P,N) =\Theta(2-\alpha)
$~\citep{Gardner1987MaximumSC,Gardner1988TheSO,shcherbina_rigorous_2003},
indicating that in this limit all dichotomies are realized if $\alpha < 2$ and no dichotomies can be realized if $\alpha>2$, making $\alpha_c = 2$ a critical point in this limit.
 
The VC dimension~\citep{Vapnik1971ChervonenkisOT} of the perceptron trained on points in dimension $N$ is defined to be the largest number of points $P$ such that all dichotomies are linearly separable for some choice of the $P$ points. These points can be taken to be in general position.\footnote{This is because linear dependencies decrease the number of separable dichotomies (see \cite{hertz2018introduction}).}
Because $f(P,N)=1$ precisely when $P \leq N$, the VC dimension is always $N=P$ for any finite $N$ and $P$ (see \cite{abu2012learning}), even while the asymptotics of $f$ reveal that $f(P,N)$ approaches $1$ at any $N>P/2$ as $N$ becomes large.
In this sense the perceptron capacity yields strictly more information than the VC dimension, and becomes comparatively more descriptive of the expressivity as $N$ becomes large.

In our $G$-invariant storage problem, the relevant scale is $\alpha = P / N_0$ where $N_0$ is the dimension of the fixed point subspace (defined below).
We are now ready to state and prove our main theorem.

\begin{theorem}\label{thm:thm-1}
Suppose the points $\avg{\pi} \vr^{\mu}$ for $\mu \in [P]$ lie in general position in the subspace $V_0 = \range (\avg{\pi}) = \{\avg{\pi}\vx : \vx \in V\}$.
Then $V_0$ is the fixed point subspace of $\pi$, and
the fraction of linearly separable dichotomies on the $P$ $\pi$-manifolds $\{\pi (g) \vr^{\mu}: g \in G\}$ is $f(P,N_0)$,
where $N_0 = \dim V_0$.
Equivalently, $N_0$ is the number of trivial irreducible representations that appear in the decomposition of $\pi$ into irreducible representations.
\end{theorem}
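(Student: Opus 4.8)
The plan is to prove the three claims in sequence, using \Lemref{thm:lemma-1} for the manifold-to-centroid reduction and Cover's formula~\eqref{CoverThm} to count dichotomies once the data are confined to the fixed-point subspace. First I would establish that $\avg{\pi}$ is an idempotent that projects onto the fixed-point subspace $\mathrm{Fix}(\pi) = \{\vv : \pi(g)\vv = \vv \text{ for all } g \in G\}$. The key computation is $\avg{\pi}^2 = \avg{\pi}$, which follows from the homomorphism property $\pi(g)\pi(g') = \pi(gg')$ together with left-invariance of the Haar measure --- precisely the manipulation already used in the proof of \Lemref{thm:lemma-1}. Idempotence makes $\avg{\pi}$ a projection, so its range equals its set of fixed vectors. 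To identify that set with $\mathrm{Fix}(\pi)$ I would check both inclusions: left-invariance gives $\pi(h)\avg{\pi}\vx = \avg{\pi}\vx$ for every $\vx$, so $\range(\avg{\pi}) \subseteq \mathrm{Fix}(\pi)$, while any fixed $\vv$ satisfies $\avg{\pi}\vv = \vv$, giving the reverse inclusion. This yields $V_0 = \range(\avg{\pi}) = \mathrm{Fix}(\pi)$.

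Next I would assemble the capacity claim. By \Lemref{thm:lemma-1}, the fraction of separable dichotomies on the $P$ $\pi$-manifolds equals the fraction of separable dichotomies on the $P$ centroids $\avg{\pi}\vr^\mu$, all of which lie in $V_0$. Decomposing an arbitrary readout as $\vw = \vw_0 + \vw_\perp$ with $\vw_0 \in V_0$ and $\vw_\perp \perp V_0$, the margin $\vw^\top \avg{\pi}\vr^\mu = \vw_0^\top \avg{\pi}\vr^\mu$ depends only on $\vw_0$; hence separating the centroids in $\sR^N$ is equivalent to separating them in $V_0 \cong \sR^{N_0}$ with a homogeneous hyperplane. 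Since by hypothesis the centroids are in general position within $V_0$, Cover's formula~\eqref{CoverThm} applies verbatim in dimension $N_0$ and gives the fraction $f(P, N_0)$.

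Finally, for the representation-theoretic restatement I would invoke complete reducibility for representations of compact groups to write $\pi \cong \bigoplus_i m_i \rho_i$ with the $\rho_i$ irreducible. The fixed subspace of a nontrivial irreducible block is zero (it is an invariant subspace, hence $0$ by irreducibility), whereas each trivial summand contributes exactly one fixed dimension, so $N_0 = \dim \mathrm{Fix}(\pi)$ equals the multiplicity of the trivial representation. A cleaner route, which I would actually present, is the trace identity $N_0 = \rank \avg{\pi} = \Tr \avg{\pi} = \avg{\chi_\pi(g)}_{g \in G}$, recognizing the final average as the standard inner product of the character $\chi_\pi$ with the trivial character, i.e. the trivial multiplicity.

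I expect the one genuinely delicate point to be the confinement-to-subspace step in the capacity claim: one must be careful that general position is asserted \emph{inside} $V_0$ (every subset of at most $N_0$ centroids linearly independent) rather than in the ambient $\sR^N$, and that homogeneous separation in $\sR^N$ reduces exactly to homogeneous separation in $V_0$ with no degrees of freedom gained or lost. The remaining steps are routine bookkeeping with Haar invariance and elementary character theory.
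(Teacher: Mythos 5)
Your proposal is correct, and its overall architecture matches the paper's: reduce the $\pi$-manifolds to their centroids via \Lemref{thm:lemma-1}, observe that the centroids live in $V_0$, and apply Cover's formula~\eqref{CoverThm} in dimension $N_0$. Where you genuinely diverge is in how the projection property of $\avg{\pi}$ is established. The paper first decomposes $\pi$ into irreps by complete reducibility and then invokes the Grand Orthogonality Theorem to conclude that all nontrivial irreps average to zero, so that $\avg{\pi}$ projects onto the direct sum of the trivial blocks. You instead give the elementary Reynolds-operator argument: $\avg{\pi}^2 = \avg{\pi}$ by Haar invariance, hence $\avg{\pi}$ is a projection, and the two inclusions $\range(\avg{\pi}) \subseteq \mathrm{Fix}(\pi)$ and $\mathrm{Fix}(\pi) \subseteq \range(\avg{\pi})$ identify its range with the fixed-point subspace --- no representation theory needed for the main claim, with complete reducibility (or your trace identity $N_0 = \Tr\avg{\pi} = \avg{\chi_\pi}$) entering only for the ``equivalently'' restatement counting trivial irreps. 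Your route is more self-contained and makes the theorem's first assertion ($V_0 = \mathrm{Fix}(\pi)$) an explicit proved step rather than a consequence of orthogonality; the paper's route buys the structural picture of which irrep blocks are killed by averaging, which it then reuses in the examples of \Secref{sec:ex-cyclic}. You also supply a detail the paper leaves implicit: the decomposition $\vw = \vw_0 + \vw_\perp$ showing that homogeneous separation in $\sR^N$ reduces exactly to homogeneous separation inside $V_0$, which is precisely the justification needed to apply Cover's count in dimension $N_0$ rather than $N$.
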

The \newterm{fixed point subspace} is the subspace $W=\{w \in V | gw=w \text{ ,} \forall g \in G\}$.
The theorem and its proof use the notion of \newterm{irreducible} representations (irreps), which are in essence the fundamental building blocks for general representations.
Concretely, any representation of a compact group over a real vector space decomposes into a direct sum of irreps~\citep{NaimarkStern, BumpLie}.
A representation $\pi : G \to \gl(V)$ is irreducible if $V$ is not 0 and if no vector subspace of $V$ is stable under $G$, other than $0$ and $V$ (which are always stable under $G$).
A subspace $W$ being stable (or invariant) under $G$ means that $\pi (g)\vw \in W$ for all $\vw \in W$ and $g \in G$.
The condition that the points $\avg{\pi} \vr^{\mu}$ be in general position essentially means that there is no prescribed special relationship between the $\vr^{\mu}$ and between the $\vr^{\mu}$ and $\avg{\pi}$.
Taking the $\vr^{\mu}$ to be drawn from a full-rank Gaussian distribution is sufficient to satisfy this condition.
\vspace{-.1in}
\begin{proof}
By the theorem of complete reducibility (see~\cite{FultonHarris}), $\pi$ admits a decomposition into a direct sum of irreps $\pi \cong \pi_{k_1} \oplus \pi_{k_2} \oplus ... \oplus \pi_{k_M}$ acting on vector space $V = V_1 \oplus V_2 \oplus ... V_M$,
where $\cong$ denotes equality up to similarity transformation.
The indices $k_j$ indicate the type of irrep corresponding to invariant subspace $V_j$.
The fixed point subspace $V_0$ is the direct sum of subspaces where trivial $k_j=0$ irreps act: $V_0 = \bigoplus_{n: k_n = 0} V_n$.
By the Grand Orthogonality Theorem of irreps (see~\cite{LiboffGOT})
all non-trivial irreps average to zero $\left< \pi_{k,ij}(g) \right>_{g\in G} \propto \delta_{k,0}\delta_{i,j}$.
Then, the matrix $\avg{\pi}$ simply projects the data to $V_0$.
By \Lemref{thm:lemma-1} the fraction of separable dichotomies on the $\pi$-manifolds is the same as that of their centroids $\avg{\pi} \vr^{\mu}$.
Since, by assumption, the $P$ points $\avg{\pi} \vr^{\mu}$ lie in general position in $V_0$, the fraction of separable dichotomies is $f(P, \dim V_0)$ by \Eqref{CoverThm}.
\end{proof}

\textbf{Remark:} A main idea used in the proof is that only nontrivial irreps average to zero.
This will be illustrated with examples in \Secref{sec:ex-cyclic} below.

\textbf{Remark:} For finite groups, $N_0$ can be easily computed by averaging the trace of $\pi$, also known as the \textit{character}, over $G$: $N_0 = \avg{ \Tr (\pi (g)) }_{g\in G} = \Tr (\avg{\pi})$ (see~\cite{serreLinearRepresentationsFinite2014}).

\textbf{Remark:} If the perceptron readout has a bias term $b$ i.e. the output of the perceptron is $\vw^{\top}\pi (g) \vr + b$, then this can be thought of as adding an invariant dimension. This is because the output can be written $\tilde{\vw}^{\top}\tilde{\pi} (g) \tilde{\vr}$ where $\tilde{\vw} = (\vw, b)$, $\tilde{\vr}=(\vr, 1)$, and $\tilde{\pi}(g) = \pi(g) \oplus 1$ is $\pi$ with an extra row and column added with a 1 in the last position and zeros everywhere else.
Hence the fraction of separable dichotomies is $f(P, N_0+1)$.

These results extend immediately to a notion of VC-dimension of group-invariant perceptrons.
\begin{corollary}\label{thm:corr-3}
Let the VC dimension for $G$-invariant perceptrons with representation $\pi$, $N_{VC}^{\pi}$, denote the maximum number $P$, such that there exist $P$ anchor points $\{ \vr^\mu \}_{\mu=1}^P$ so that $\{ (\pi(g) \vr^\mu, y^{\mu}) \}_{\mu \in [P], g \in G}$ is separable for all possible binary dichotomies $\{y^\mu\}_{\mu\in [P]}$. Then $N_{VC}^{\pi} = \text{dim}(V_0)$.  
\end{corollary}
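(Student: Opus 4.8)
The plan is to read off \Cref{thm:corr-3} directly from \Thmref{thm:thm-1} and \Lemref{thm:lemma-1}, by proving matching upper and lower bounds on $N_{VC}^{\pi}$ and using the fact that Cover's fraction satisfies $f(P,N)=1$ exactly when $P \le N$. Throughout I write $N_0 = \dim V_0$, so the goal is $N_{VC}^{\pi} = N_0$.

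For the lower bound $N_{VC}^{\pi} \ge N_0$, I would exhibit $P = N_0$ anchor points that are shattered. Since $\avg{\pi}$ is the projection onto $V_0$, its restriction to $V_0$ is the identity, so choosing the $\vr^{\mu}$ to be $N_0$ points in general position inside $V_0$ (such points exist because $\dim V_0 = N_0$; equivalently, drawing the $\vr^{\mu}$ from a full-rank Gaussian works with probability one) makes the centroids $\avg{\pi}\vr^{\mu}=\vr^{\mu}$ lie in general position in $V_0$. Then \Thmref{thm:thm-1} gives that the fraction of separable dichotomies on the associated $\pi$-manifolds is $f(N_0,N_0)=1$, i.e. every dichotomy is realizable, so these $N_0$ points are shattered.

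For the upper bound $N_{VC}^{\pi} \le N_0$, I would show that no collection of $P > N_0$ anchor points can be shattered, for any choice of the $\vr^{\mu}$. By \Lemref{thm:lemma-1}, separability of the $\pi$-manifolds is equivalent to separability of the centroids $\avg{\pi}\vr^{\mu}$, and these always lie in $V_0 = \range(\avg{\pi})$ regardless of the $\vr^{\mu}$. The problem thus reduces to separating $P$ points contained in the $N_0$-dimensional space $V_0$ by homogeneous hyperplanes, for which the number of realizable dichotomies is at most $f(P,N_0)\,2^P$; this is strictly less than $2^P$ whenever $P > N_0$. Here I would invoke the standard fact (footnoted in the text) that linear dependencies among the points only decrease the number of separable dichotomies, so the bound holds for every configuration and not merely generic ones. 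Hence not all $2^P$ dichotomies are separable, and $P > N_0$ points cannot be shattered.

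Combining the two bounds gives $N_{VC}^{\pi} = N_0 = \dim V_0$. The only delicate point is that the upper bound must cover arbitrary, possibly degenerate, configurations of the $\vr^{\mu}$ rather than just points in general position; this is precisely what the monotonicity of the separable-dichotomy count under linear dependencies supplies, so I expect that to be the sole subtlety. Everything else is a direct transcription of the ordinary-perceptron VC fact $f(P,N)=1 \iff P \le N$ into the fixed-point subspace $V_0$, mediated by \Lemref{thm:lemma-1} and \Thmref{thm:thm-1}.
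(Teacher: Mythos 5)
Your proposal is correct and rests on the same machinery as the paper (Theorem~\ref{thm:thm-1} plus the fact that $f(P,N)=1$ iff $P\le N$), but it is strictly more complete than the paper's own argument. The paper's proof is a single sentence establishing only the lower bound: for $P \le \dim V_0$, points with centroids in general position realize all dichotomies, hence $N_{VC}^{\pi} \ge \dim V_0$. It leaves the upper bound $N_{VC}^{\pi} \le \dim V_0$ implicit. You supply exactly that missing half: by Lemma~\ref{thm:lemma-1} any configuration of anchor points, degenerate or not, reduces to its centroids $\avg{\pi}\vr^{\mu}$, which necessarily lie in the $N_0$-dimensional space $V_0 = \range(\avg{\pi})$; the number of homogeneously separable dichotomies of $P$ points confined to an $N_0$-dimensional space is maximized in general position at $f(P,N_0)\,2^P < 2^P$ for $P > N_0$ (the monotonicity under linear dependencies being the footnoted fact from \cite{hertz2018introduction}), so no $P > N_0$ points can be shattered. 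Your observation that the upper bound must hold for \emph{arbitrary} configurations, not just generic ones, is precisely the subtlety that distinguishes a VC-dimension claim from a capacity claim, and your use of $\avg{\pi}|_{V_0} = \mathrm{id}$ to construct the shattered set is also correct (it follows from idempotency of $\avg{\pi}$ via Haar invariance). In short: same route, but yours is the complete version of the argument the paper only sketches.
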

\begin{proof}
    By theorem \ref{thm:thm-1} and Equation \ref{CoverThm}, all possible dichotomies are realizable for $P \leq \text{dim}(V_0)$ provided the points $\left< \pi \right> \vr^\mu$ are in general position. 
\end{proof}

Our theory also extends immediately to subgroups. For example, strided convolutions are equivariant to subgroups of the regular representation.
\begin{corollary}\label{thm:corr-2}
A solution to the the $G$-invariant classification problem necessarily solves the $G'$-invariant problem where $G'$ is a subgroup of $G$. 
\end{corollary}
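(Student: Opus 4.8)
The plan is to prove this by a direct \emph{constraint-inclusion} argument: restricting the representation $\pi$ from $G$ to the subgroup $G'$ only shrinks the family of separation constraints, so any readout that meets all of them continues to work. First I would fix an arbitrary labeling $\{y^\mu\}_{\mu \in [P]}$ and unwind the two problems. Solving the $G$-invariant problem means exhibiting a weight vector $\vw$ with $y^\mu \vw^\top \pi(g) \vr^\mu > 0$ for every $g \in G$ and every $\mu \in [P]$; solving the $G'$-invariant problem requires the same inequality but only for $g$ ranging over $G'$.

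Next I would note that $\pi$ restricted to $G'$, namely $\pi|_{G'} : G' \to \gl(\sR^N)$, is itself a linear representation of $G'$ (the restriction of a homomorphism to a subgroup is again a homomorphism), so the $G'$-invariant problem is a bona fide instance of the setup of \Secref{sec:perc-capacity} with the $\pi$-manifolds replaced by the $\pi|_{G'}$-orbits $\{\pi(g)\vr^\mu : g \in G'\}$. Because $G' \subseteq G$, each such $G'$-orbit is a subset of the corresponding $G$-orbit, and hence the system of inequalities defining the $G'$-problem is literally a sub-collection of the inequalities defining the $G$-problem. Consequently, if $\vw$ satisfies all of the $G$-constraints then it satisfies all of the $G'$-constraints \emph{a fortiori}, and the very same $\vw$ is a solution to the $G'$-invariant problem. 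This proves the claim.

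The main obstacle is essentially absent: the only point needing care is that the two problems are correctly aligned---that $G'$ acts through the restricted representation $\pi|_{G'}$ and that its orbits sit inside the $G$-orbits---after which the conclusion is pure set inclusion of constraints and requires no geometry. As a consistency check, one can compare with \Thmref{thm:thm-1}: the fixed point subspace of $G'$ contains that of $G$ (anything fixed by all of $G$ is fixed by each element of $G' \subseteq G$), so $\dim V_0^{G'} \geq \dim V_0^{G}$ and therefore $f(P, \dim V_0^{G'}) \geq f(P, \dim V_0^{G})$ by the monotonicity of \Eqref{CoverThm} in its second argument; the subgroup problem is thus at least as separable, exactly as the corollary's implication demands.
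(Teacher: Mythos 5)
Your proof is correct and takes essentially the same approach as the paper's: both arguments observe that the $G'$-invariant constraints $y^\mu \vw^\top \pi(g)\vr^\mu > 0$, $g \in G'$, are a sub-collection of the $G$-invariant constraints, so the very same $\vw$ works \emph{a fortiori}. The extra details you supply (that $\pi|_{G'}$ is a bona fide representation, and the consistency check against \Thmref{thm:thm-1} via containment of fixed point subspaces) are sound elaborations of the same one-line constraint-inclusion argument.
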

\begin{proof}
Assume that $y^\mu \vw^\top \vr(g \vx^\mu) > 0$ for $g \in G$. $G' \subseteq G$ $\implies$ $y^\mu \vw^\top \vr(g \vx^\mu) > 0$ $\forall g \in G'$. 
\end{proof}
\vspace{-.1in}
Consequently, the $G'$-invariant capacity is always higher than the capacity for the $G$-invariant classification problem.



\section{Example Application: The cyclic group \texorpdfstring{$\cyc{m}$}{Cm}}\label{sec:ex-cyclic}
\vspace{-.1in}
\begin{figure}
    \centering
    \includegraphics[width=\linewidth]{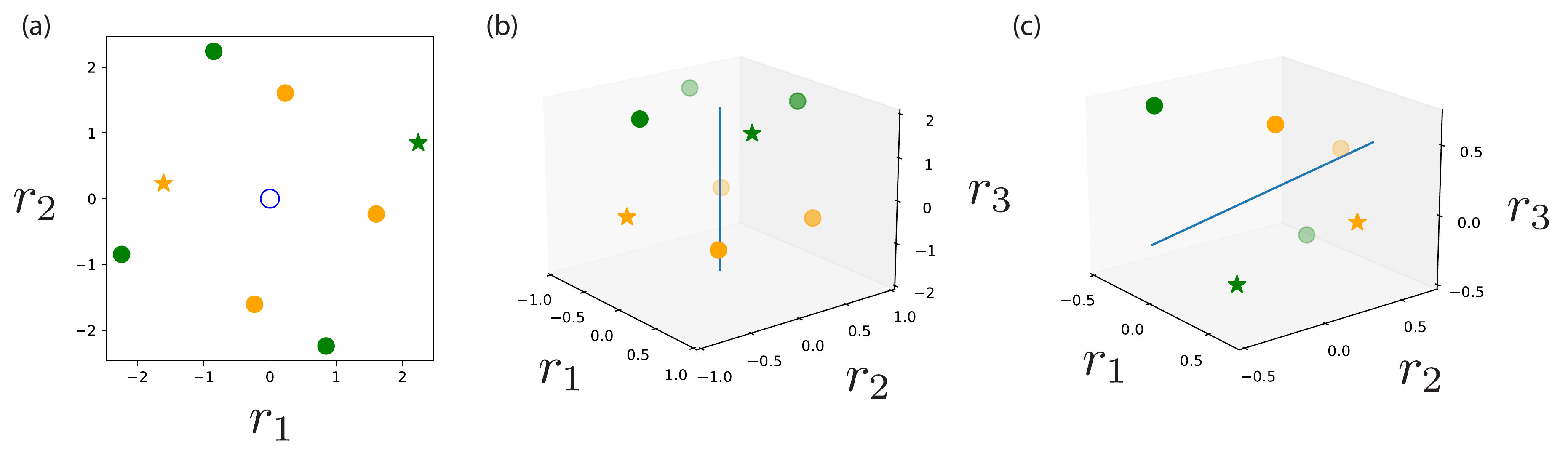}
    \caption{$\pi$-manifolds for different $\pi$, illustrating that only the fixed point subspace contributes to capacity. In each panel two manifolds are plotted, with color denoting class label. Stars indicate the random points $\vr^{\mu}$ for $\mu \in \{1,2\}$ where the orbits begin, and closed circles denote the other points in the $\pi$-manifold. For (a) and (b) the group being represented is $G=\cyc{4}$ and for (c) $G=\cyc{3}$. (a) Here $\pi(g)$ is the $2 \times 2$ rotation matrix $\mR(2 \pi g / 4)$. The open blue circle denotes the fixed point subspace $\{{\bm 0} \}$. (b) Here $\pi(g)$ is the $3\times3$ block-diagonal matrix with the first $2\times2$ block being $\mR(2 \pi g / 4)$ and second $1\times1$ block being $1$. The blue line denotes the fixed point subspace $\vspan \{(0,0,1)\}$. (c) Here $\pi(g)$ is the $3 \times 3$ matrix that cyclically shifts entries of length-$3$ vectors by $g$ places. The blue line denotes the fixed point subspace $\vspan \{(1,1,1)\}$. } 
    \label{fig:fig1}
\end{figure}
In this section we illustrate the theory in the case of the cyclic group $G = \cyc{m}$ on $m$ elements.
This group is isomorphic to the group of integers $\{0,1,...,m-1\}$ under addition modulo $m$, and this is the form of the group that we will consider.
An example of this group acting on an object is an image that is shifted pixel-wise to the left and right, with periodic boundaries.
In Appendix \ref{app:so3-example} we show an application of our theory to the non-abelian Lie group $SO(3)$.


\subsection{Rotation matrices}
\vspace{-.1in}
The $2\times 2$ discrete rotation matrices
$\mR(\theta_g) \equiv \begin{bmatrix} \cos(\theta_g) &  -\sin(\theta_g) \\
    \sin(\theta_g) & \cos(\theta_g)\end{bmatrix}$
where $\theta_g = 2 \pi g / m$ and $g\in \cyc{m}$, are one possible representation of $\cyc{m}$; in this case $V = \sR^2$.
This representation is irreducible and nontrivial, which implies that the dimension of the fixed point subspace is $0$ (only the origin is mapped to itself by $\mR$ for all $g$).
Hence the fraction of linearly separable dichotomies of the $\pi$-manifolds by \Thmref{thm:thm-1} is $f(P,0)$.
This result can be intuitively seen by plotting the orbits, as in \Figref{fig:fig1}a for $m=4$.
Here it is apparent that it is impossible to linearly separate two or more manifolds with different class labels, and that the nontrivial irrep $\mR$ averages to the zero matrix.


The representation can be augmented by appending trivial irreps, defining $\pi: G \to \gl (\sR ^ {N})$ by $\pi (g) = \mR(\theta_g) \oplus \mI \equiv \begin{bmatrix} \mR(\theta_g) &  0 \\
    0 & \mI \end{bmatrix}$
where $\mI$ is an $(N-2) \times (N-2)$-dimensional identity matrix.
The number of trivial irreps is $N-2$, so that the capacity is $f(P, N-2)$.
This is illustrated in \Figref{fig:fig1}b for the case $N=3$.
Here we can also see that the trivial irrep, which acts on the subspace $\vspan \{ (0,0,1)\}$, is the only irrep in the decomposition of $\pi$ that does not average to zero.
This figure also makes intuitive the result of \Lemref{thm:lemma-1} that dichotomies are realizable on the $\pi$-manifolds if and only if the dichotomies are realizable on the centroids of the manifolds.


\subsection{The regular representation of \texorpdfstring{$\cyc{m}$}{Cm}}
\vspace{-.1in}
Suppose $\pi: \cyc{m} \to \gl (V)$ is the representation of $\cyc{m}$ consisting of the cyclic shift permutation matrices (this is called the \newterm{regular representation} of $\cyc{m}$).
In this case $V=\sR^{m}$ and $\pi (g)$ is the matrix that cyclically shifts the entries of a length-$m$ vector $g$ places.
For instance, if $m=3$ and $\vv=(1,2,3)$ then $\pi (2) \vv = (2, 3, 1)$.

In Appendix \ref{app:cyclic-irreps} we derive the irreps of this representation,
which consist of rotation matrices of different frequencies.
There is one copy of the trivial irrep $\pi_0 (g) \equiv 1$ corresponding with the fixed point subspace $\vspan\{\vone_m\}$ where $\vone_m$ is the length-$m$ all-ones vector. Hence the fraction of separable dichotomies is $f(P,1)$.
This is illustrated in \Figref{fig:fig1}c in the case where $m=3$.
The average
of the regular representation matrix is $\avg{\pi} = \frac{1}{|G|} \vone_m \vone_m^\top$, 
indicating that $\avg{\pi}$ projects
data along $\vone_m$.

\subsection{Direct sums of Regular Representations}\label{sec:ex-direct-sum}
\vspace{-.1in}
For our last example we define a representation using the isomorphism $\cyc{m} \cong \cyc{m_1} \oplus \cyc{m_2}$ for $m = m_1 m_2$ and $m_1$ and $m_2$ coprime\footnote{Two numbers are coprime if they have no common prime factor.}.
Let
$\pi^{(1)} : \cyc{m_1} \to \gl (\sR ^ {m_1})$ and $\pi^{(2)} : \cyc{m_2} \to \gl (\sR ^ {m_2})$
be the cyclic shift representations (i.e. the regular representations) of $\cyc{m_1}$ and $\cyc{m_2}$, respectively.
Consider the representation $\pi^{(1)} \oplus \pi^{(2)}: \cyc{m} \to \gl (\sR^{m_1+m_2}) $ defined by $(\pi^{(1)} \oplus \pi^{(2)})(g) \equiv \pi^{(1)}(g \mod m_1) \oplus \pi^{(2)}(g \mod m_2)$,
the block-diagonal matrix with $\pi^{(1)}(g \mod m_1)$ being the first and $\pi^{(2)}(g \mod m_2)$ the second block.

There are two copies of the trivial representation in the decomposition of $\pi^{(1)} \oplus \pi^{(2)}$, corresponding to the one-dimensional subspaces $\vspan \{ (\vone_{m_1}, \vzero_{m_2})\}$ and $\vspan \{(\vzero_{m_1}, \vone_{m_2})\}$, where $\vzero_{m_1}$ is the length-$k$ vector of all zeros.
Hence the fraction of separable dichotomies is $f(P,2)$.
This reasoning extends simply to direct sums of arbitrary length $\ell$, yielding a fraction of $f(P,\ell).$\footnote{Our results do not actually require that the $m_k$ be coprime, but rather that none of the $m_k$ divide one of the others.
To see this, take $\tilde{m}$ and $\tilde{m}_k$, to be $m$ and the $m_k$ after being divided by all divisors common among them.
Then $\cyc{\tilde{m}}\cong \oplus_{k=1}^{\ell} \cyc{\tilde{m}_k}$ and
provided none of the $\tilde{m}_k$ are $1$, one still gets a fraction of $f(P, \ell)$.
}
These representations are used to build and test a novel G-convolutional layer architecture with higher capacity than standard CNN layers in Section \ref{sec:gcnns}. 

These representations are analogous to certain formulations of grid cell representations as found in entorhinal cortex of rats~\citep{haftingMicrostructureSpatialMap2005}, which have desirable qualities in comparison to place cell representations~\citep{sreenivasanGridCellsGenerate2011}.\footnote{Place cells are analogous to standard convolutional layers.}
Precisely, a collection $\{\cyc{m_k} \times \cyc{m_k}\}_k$ of grid cell modules encodes a large 2-dimensional spatial domain $\cyc{m}\times \cyc{m}$ where $m=\prod_k m_k$.



\section{G-Equivariant Neural Networks}\label{sec:gcnns}
\vspace{-.1in}

The proposed theory can shed light on the feature spaces induced by $G$-CNNs.
Consider a single convolutional layer feature map for a finite\footnote{We consider finite groups here for simplicity, but the theory extends to compact Lie groups.} group $G$ with the following activation
\begin{align}
    a_{i,k}(\vx) = \phi( \vw_i^{\top} g_k^{-1} \vx ) \ , \  g_k \in G \ , \ i \in \{ 1,...,N \}
\end{align}
for some nonlinear function $\phi$.
For each filter $i$, and under certain choices of $\pi$, the feature map $\va_i(\vx) \in \mathbb{R}^{|G|}$ exhibits the equivariance property $\va_i(g_k \vx) = \pi(g_k) \va_i(\vx)$.
We will let $\bm a(x) \in \mathbb{R}^{|G| N}$ denote a flattened vector for this feature map.

In a traditional periodic convolutional layer applied to inputs of width $W$ and length $L$, the feature map of a single filter $\va_i(\vx) \in \mathbb{R}^{|G|}$ is equivariant with the regular representation of the group $G = \cyc{W} \times \cyc{L}$ (the representation that cyclically shifts the entries of $W\times L$ matrices).
Here the order of the group is $|G| = WL$.
Crucially, our theory shows that this representation contributes exactly one trivial irrep per filter (see Appendix \ref{app:standard-cnn}).
Since the dimension of the entire collection of $N$ feature maps $\va(x)$ is $N|G|$ for finite groups $G$, one might naively expect capacity to be $P \sim 2 N |G|$ for large $N$ from \citet{Gardner1987MaximumSC}.
However, Theorem \ref{thm:thm-1} shows that for $G$-invariant classification, only the trivial irreps contribute to the classifier capacity.
Since the number of trivial irreps present in the representation is equal to the number of filters $N$, we have $P \sim 2N$.

We show in Figure \ref{fig:capacity_plots} that our prediction for
$f(P,N)$ matches
that empirically measured by training logistic regression linear classifiers on the representation. We perform this experiment on both (a) a random convolutional network and (b) VGG-11~\citep{Simonyan2015VeryDC} pretrained on CIFAR-10~\citep{krizhevskyLearningMultipleLayers2009} by \citep{kuangliu2021}.
For these models we vary $\alpha$ by fixing the number of input samples and varying the number of output channels by simply removing
channels from the output tensor.
The convolutions in these networks are modified to have periodic boundary conditions while keeping the actual filters the same -- see Appendix \ref{app:standard-cnn} and \Figref{fig:non-periodic-conv} for more information and the result of using non-periodic convolutions, which impact the capacity but not the overall scaling with $N_0$.

Other GCNN architectures can have different capacities.
For instance, a convolutional layer equivariant to the direct sum representation of \Secref{sec:ex-direct-sum} has double the capacity with $P \sim 4N$ (\Figref{fig:capacity_plots}c), since each output channel contributes two trivial irreps.
See Appendix \ref{app:direct-sum-cnn} for an explicit construction of such a convolutional layer and a derivation of the capacity.

\subsection{Pooling Operations}
\vspace{-.1in}

In convolutional networks, local pooling is typically applied to the feature maps which result from each convolution layer.
In this section, we describe how our theory can be adapted for codes which contain such pooling operations.
We will first assume that $\pi$ is an $N$-dimensional representation of $G$. Let $\mathcal P(\bm r) : \mathbb{R}^{N} \to \mathbb{R}^{N/k}$ be a pooling operation which reduces the dimension of the feature map. The condition that a given dichotomy $\{y^\mu\}$ is linearly separable on a pooled code is 
\begin{align}
    \exists \vw \in \mathbb{R}^{N/k} \  \forall \mu \in [P] , g\in G : y^\mu \vw^\top \mathcal P( \pi(g) \bm r^\mu ) > 0
\end{align}
We will first analyze the capacity when $\mathcal P(\cdot)$ is a linear function (average pooling) before studying the more general case of local non-linear pooling on one and two-dimensional signals.

\subsubsection{Local Average Pooling}\label{sec:local_avg_pooling}
In the case of average pooling, the pooling function $\mathcal P(\cdot)$ is a linear map, represented with matrix $\mP$ which averages a collection of feature maps over local windows. Using an argument similar to Lemma \ref{thm:lemma-1} (Lemma \ref{thm:lemma-A1} and Theorem \ref{app:thm_avg_pool_capacity} in Appendix \ref{app:pooling}), we prove that the capacity of a standard CNN is not changed by local average pooling: for a network with $N$ filters, local average pooling preserves the one trivial dimension for each of the $N$ filters. Consequently the fraction of separable dichotomies is $f(P,N)$.

\subsubsection{Local Nonlinear Pooling}\label{sec:local_max_pooling}
Often, nonlinear pooling operations are applied to downsample feature maps. For concreteness, we will focus on one-dimensional signals in this section and relegate the proofs for two-dimensional signals (images) to Appendix \ref{app:pooling}. Let $\bm r(\bm x) \in \mathbb{R}^{N \times D}$ represent a feature map with $N$ filters and length-$D$ signals. Consider a pooling operation $\mathcal P(\cdot)$ which maps the $D$ pixels in each feature map into new vectors of size $D/k$ for some integer $k$. Note that the pooled code is equivariant to the subgroup $H = \mathbb{Z}^{D/k}$, in the sense that $\mathcal P( \pi(h) \bm r) = \rho(h) \mathcal P(\bm r)$ for any $h \in H$. The representation $\rho$ is the regular representation of the subgroup $H$. We thus decompose $G$ into cosets of size $D/k$: $g = j h$, where $j \in \cyc{k}$ and $h \in \cyc{D/k}$. The condition that a vector $\vw$ separates the dataset is
\begin{align}\label{eq:local_pooling}
    \forall \mu \in [P], j \in \cyc{k}, h\in H : y^\mu \vw^\top \rho(h) \mathcal P\left( \pi(j)  \bm r^\mu \right) > 0.
\end{align}
We see that there are effectively $k$ points belonging to each of the $P$ orbits in the pooled code.
Since $\mathcal P\left( \cdot \right)$ is nonlinear, the averaging trick utilized in Lemma \ref{thm:lemma-1} is no longer available.
However, we can obtain a lower bound on the capacity, $f(P, \floor{N/k})$, from a simple extension of Cover's original proof technique as we show in Appendix \ref{app:lower_and_upper_bound_pool}.
Alternatively, an upper bound $f \leq f(P,N)$ persists since a $\bm w$ which satisfies \Eqref{eq:local_pooling} must separate $\left< \rho(h) \right>_{h\in H} \mathcal{P}(\bm r^\mu)$. This upper bound is tight when all $k$ points $\left< \rho(h) \right>_{h\in H} \mathcal{P}(\pi(j) \bm r^\mu)$ coincide for each $\mu$, giving capacity $f(P,N)$.
This is what occurs in the average pooling case where the upper bound $f \leq f(P,N)$ is tight. Further, if we are only interested in the $H$-invariant capacity problem, the fraction of separable dichotomies is $f(P,N)$, since $\rho$ is a regular representation of $H$ as we show in \ref{app:two_dimensional_pool}.

\begin{figure}
    \centering
    \includegraphics[width=\linewidth]{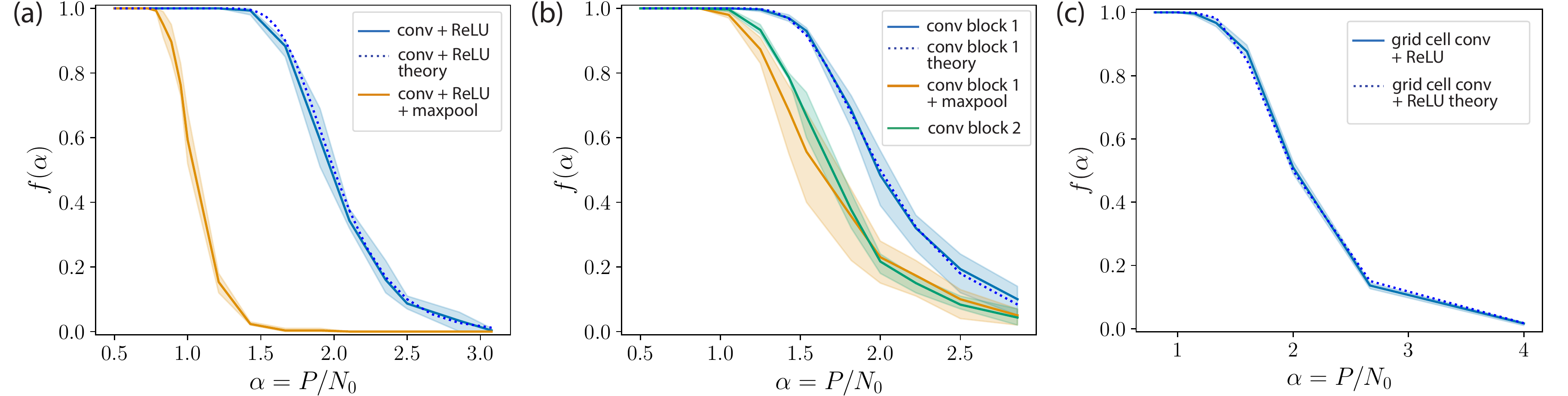}
    \caption{Capacity of GCNN representations. Solid lines denote the empirically measured fraction $f(\alpha)$ of 100 random dichotomies for which a logistic regression classifier finds a separating hyperplane, where $\alpha = P/N_0$. Dotted lines denote theoretical predictions.
    Shaded regions depict 95\% confidence intervals over random choice of inputs, as well as network weights in (a) and (c).
    (a) $f(\alpha)$ of a random periodic convolutional layer after ReLU (blue line) and followed by 2x2 max pool (orange line), with $P=40$ and $N_0 =$ \# output channels.
    Max pooling reduces capacity by a factor between 1/4 and 1 as predicted by our theory.
    (b) $f(\alpha)$ of VGG-11 pretrained on CIFAR-10 after a periodic convolution, batchnorm, and ReLU (blue line), followed by a 2x2 maxpool (orange line), and then another set of convolution, batchnorm, and ReLU (green line), with $P=20$ and $N_0 =$ \# output channels.
    Max pooling reduces capacity as predicted.
    (c) $f(\alpha)$ after a random convolutional layer equivariant to the direct sum representation of $\cyc{10} \oplus \cyc{8}$ as defined in \Secref{sec:ex-direct-sum}, with $P=16$ and $N_0 =2$ (\# output channels).
    }
    \label{fig:capacity_plots}
\end{figure}

\subsubsection{Global pooling}\label{sec:results-3}
Lemma~\ref{thm:lemma-1} shows that linear separability of $\pi$-manifolds is equivalent to linear separability after global average pooling (i.e. averaging the representation over the group).
This is relevant to CNNs, which typically perform global average pooling following the final convolutional layer, which is typically justified by a desire to achieve translation invariance.
This strategy is efficient: Lemma~\ref{thm:lemma-1} implies that
it allows the learning of optimal linear readout weights following the pooling operation, given only a single point from each $\pi$-manifold (i.e. no need for data augmentation).
Global average pooling reduces the problem of linearly classifying $|G|P$ points in a space of dimension $N$ to that of linearly classifying $P$ points in a space of dimension $\dim V_0$.
Note that by an argument similar to Section \ref{sec:local_max_pooling}, global max pooling may in contrast reduce capacity.

\subsection{Induced representations}\label{sec:induced-reps}
\vspace{-.1in}
Induced representations are a fundamental ingredient in the construction of general equivariant neural network architectures~\citep{cohenGeneralTheoryEquivariant2019}.
Here we state our result, and relegate a formal definition of induced representations and the proof of the result to Appendix \ref{app:induced-reps}.

\begin{proposition}\label{thm:induced}
Let $\pi$ be a representation of a finite group induced from $\rho$. Then the fraction of separable dichotomies of $\pi$-manifolds is equal to that  of the $\rho$-manifolds.
\end{proposition}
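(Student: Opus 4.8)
The plan is to reduce the entire question to a count of trivial irreducible representations and then invoke \Thmref{thm:thm-1}. By that theorem the fraction of linearly separable dichotomies on the $\pi$-manifolds is $f(P, N_0^{\pi})$, where $N_0^{\pi}$ is the dimension of the fixed-point subspace of $\pi$, equivalently the multiplicity of the trivial irrep in $\pi$; likewise the fraction on the $\rho$-manifolds is $f(P, N_0^{\rho})$. Since $f(P, \cdot)$ is the same function in both settings (and anchor points drawn from a full-rank Gaussian are in general position in the respective fixed-point subspaces with probability one, so the hypotheses of \Thmref{thm:thm-1} are met in each case), it suffices to prove the single identity $N_0^{\pi} = N_0^{\rho}$.

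Writing $H \le G$ for the subgroup from which $\pi = \mathrm{Ind}_H^G \rho$ is induced, I would establish this identity by Frobenius reciprocity. Recording that $N_0^{\pi}$ is the multiplicity $\langle \pi, \mathbf{1}_G \rangle_G$ of the trivial irrep $\mathbf{1}_G$ in $\pi$ and $N_0^{\rho} = \langle \rho, \mathbf{1}_H \rangle_H$, reciprocity yields
\begin{align*}
\langle \mathrm{Ind}_H^G \rho,\ \mathbf{1}_G \rangle_G = \langle \rho,\ \mathrm{Res}^G_H \mathbf{1}_G \rangle_H = \langle \rho,\ \mathbf{1}_H \rangle_H,
\end{align*}
where the final equality uses that the restriction of the trivial representation of $G$ to $H$ is the trivial representation of $H$. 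This is precisely $N_0^{\pi} = N_0^{\rho}$, which together with the reduction above gives the claim.

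As a self-contained alternative that avoids quoting reciprocity, I would realize $\pi$ concretely on the space of functions $f : G \to W$ satisfying $f(gh) = \rho(h)^{-1} f(g)$ for all $h \in H$, with $G$ acting by left translation $(\pi(g') f)(g) = f(g'^{-1} g)$. A vector fixed by all of $G$ must be a constant function $f \equiv w_0$ by transitivity of left translation, and the equivariance constraint then forces $\rho(h) w_0 = w_0$ for every $h \in H$; hence $w_0$ ranges exactly over the fixed-point subspace of $\rho$. This exhibits an explicit isomorphism between the fixed-point subspaces of $\pi$ and $\rho$, again delivering $N_0^{\pi} = N_0^{\rho}$.

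The argument is short because \Thmref{thm:thm-1} has already collapsed the separability question to the single integer $N_0$, so no genuine obstacle arises. The only point requiring care is the bookkeeping in the concrete construction — matching conventions for the induced action and checking that the ``fixed functions are constant'' step is legitimate — which is why I expect the Frobenius-reciprocity line to be the cleanest to present, with the explicit function-space computation retained for intuition.
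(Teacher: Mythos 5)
Your main argument is essentially identical to the paper's: reduce the claim to the identity $N_0^{\pi}=N_0^{\rho}$ via Theorem~\ref{thm:thm-1}, then obtain that identity from Frobenius reciprocity (the paper phrases reciprocity as equality of the averaged characters $\langle \theta^G\rangle_G = \langle\theta\rangle_H$, which is the same statement as your inner-product form since the average of a character is the multiplicity of the trivial irrep). Your self-contained alternative --- showing that the $G$-fixed vectors of the induced representation are exactly the constant functions valued in the fixed-point subspace of $\rho$ --- is also correct and is a nice elementary supplement that the paper does not include.
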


\section{Discussion and Conclusion}
\vspace{-.1in}
Equivariance has emerged as a powerful framework to build and understand representations that reflect the structure of the world in useful ways.
In this work we take the natural step of quantifying the expressivity of these representations through the well-established formalism of perceptron capacity.
We find that the number of ``degrees of freedom'' available for solving the classification task is the dimension of the space that is fixed by the group action.
This has the immediate implication that capacity scales with the number of output channels in standard CNN layers, a fact we illustrate in simulations.
However, our results are also very general, extending to virtually any equivariant representation of practical interest -- in particular, they are immediately applicable to GCNNs built around more general equivariance relations, as we illustrate with an example of a GCNN equivariant to direct sum representations.
We also calculate the capacity of induced representations, a standard tool in building GCNNs, and show how local and global pooling operations influence capacity.
The measures of expressivity explored here could prove valuable for ensuring that models with general equivariance relations have high enough capacity to support the tasks being learned, either by using more filters or by using representations with intrinsically higher capacity.
We leave this to future work.

While the concept of perceptron capacity has played an essential role in the development of machine learning systems~\citep{scholkopfLearningKernelsSupport2002,cohenSeparabilityGeometryObject2020} and the understanding of biological circuit computations~\citep{brunel2004optimal,Chapeton2012EfficientAM,rigotti2013importance,Brunel2016IsCC,Rubin2017BalancedEA,Pehlevan2017ResourceefficientPH,lanoreCerebellarGranuleCell2021,froudarakis2020object}, more work is wanting in linking it to other computational attributes of interest such as generalization.
A comprehensive picture of the computational attributes of equivariant or otherwise structured representations of artificial and biological learning systems will likely combine multiple measures, including perceptron capacity.
There is much opportunity, and indeed already significant recent work (e.g.~\cite{sokolic17a}).

\section*{Reproducibility Statement}
Code for the simulations can be found at \url{https://github.com/msf235/group-invariant-perceptron-capacity}.
This code includes an environment.yml file that can be used to create a python environment identical to the one used by the authors.
The code generates all of the plots in the paper.

While the main text is self-contained with the essential proofs, proofs of additional results and further discussion can be found in the Appendices below.
These include
\begin{description}
    \item[Appendix \ref{app:glossary}] a glossary of definitions and notation.
    \item[Appendix \ref{app:cyclic-irreps}] a derivation of the irreps for the regular representation of the cyclic group $Z_{m}$.
    \item[Appendix \ref{app:so3-example}] a description of the irreps for the group $SO(3)$ and the resulting capacity.
    \item[Appendix \ref{app:gcnn}] a description of the construction of the GCNNs used in the paper, the methods for empirically measuring the fraction of linearly separable dichotomies, a description of local pooling, and complete formal proofs of the fraction of linearly separable dichotomies for these network representations (including with local pooling). This appendix also includes more description of the induced representation and a formal proof of the fraction of linearly separable dichotomies.
\end{description}

Appendix \ref{app:standard-cnn} also contains an additional figure, \Figref{fig:non-periodic-conv}.

\section*{Acknowledgements}
MF and CP are supported by the Harvard Data Science Initiative. MF thanks the Swartz Foundation for support. BB acknowledges the support of the NSF-Simons Center for Mathematical and Statistical Analysis of Biology at Harvard (award \#1764269) and the Harvard Q-Bio Initiative. ST was partially supported by the NSF under grant No. DMS-1439786.

\clearpage

\bibliography{manuscript.bib}
\bibliographystyle{ICLR/iclr2022_conference}

\clearpage
\appendix
\counterwithin{figure}{section}

\section{Appendix}

\subsection{Notation and Glossary}\label{app:glossary}

\begin{itemize}
    \item $\vx$: an abstract notation for an input object
    \item $\vr(\vx)$: a feature map of the input to an $N$ dimensional vector space.
    \item $\pi$: $N$ dimensional linear representation of group $G$. For each $g \in G$, $\pi(g) \in GL(\mathbb{R}^N)$ is an $N\times N$ invertible real matrix.
    \item Equivariance property: $\vr(g \vx) = \pi(g) \vr(\vx)$ for all $g \in G$ and all $\vx$.
    \item Invariant measure: a measure $\mu: G \to \mathbb{R}_+$ on $G$ with $\mu(g S) = \mu(S) = \mu(Sg)$. For finite groups, the uniform distribution. For locally compact topological groups, the Haar measure.
    \item $\left< \cdot \right>_{g \in G}$: an average over the invariant measure of $G$
    \item Irreducible representation (irrep): an irreducible representation $\rho$ on vector space $V$ satisfies $\rho(g) v \in  V$ for all $v \in  V, g \in G$.  
    \item Character $\chi(g)$: the trace of the representation $\chi(g) = \text{Tr} \ \pi(g)$.
    \item Fixed point subspace: the subspace $V_0$ for which $\pi(g) v \in V_0$ for all $v \in V_0$. 
    \item General position: a collection of $P$ points in general position in an $N$ dimensional vector space have the property that any subset of $k \leq N$ points are linearly independent. These points are generic in the sense that they satisfy no more linear relationships than they must.
    \item Dichotomy: a particular binary labeling $\{ y^\mu \}$ of $P$ points $\{\vx^\mu\}$. 
    \item $f(P,N)$: fraction of linearly separable dichotomies given by Cover's function counting theorem \citep{Cover1965GeometricalAS}.
    \item VC dimension: the largest possible integer $P$ such that there exist $P$ points where all possible dichotomies $\{y^\mu\}$ can be realized by the model \cite{abu2012learning}.
    \item Capacity: the largest possible ratio $\alpha_c = P/N$ where $P$ points in general position can be linearly separated by a $N$ dimensional perceptron with probability $1$ in an asymptotic limit where $P,N \to \infty$ with $P/N = O_{N,P}(1)$. The classical result is $\alpha_c = 2$ \citep{Gardner1987MaximumSC, shcherbina_rigorous_2003}.
    \item $\mathcal{P}(\cdot)$: a local pooling operation.
\end{itemize}

\subsection{Irreps for the cyclic group}\label{app:cyclic-irreps}
Here we compute the irreps of representations $\pi: \cyc{m} \to \gl (V)$ of the cyclic group $\cyc{m}$ over a real vector space $V$ (see~\citep{serreLinearRepresentationsFinite2014} for a derivation of the irreps when $\mV$ is a complex vector space).
To find the irreps, one can use the form for the eigenvalues and eigenvectors for circulant matrices, since all the $\pi (g)$ are circulant.
This results in the simultaneous diagonalization
$\pi (g) = \mV \left(1 \oplus \mR(2 \pi g / m) \oplus \mR(4 \pi g / m) \oplus \cdots \oplus \mR((m-1) \pi g / m)\right) \mV^{\top}$
where $\mV$ is the real-valued version of the discrete Fourier transform matrix (the columns are proportional to cosines and sines of varying frequencies, along with a column proportional to $\vone_m$).

Note that the $2x2$ rotation matrices $\mR (2 \pi g k / m)$
are irreps for $k \neq m/2$ and $k\neq 0$, since there is no one-dimensional subspace of $\sR^{2}$ that is invariant to $\mR (2 \pi g k / m)$ for all $g$.
The exception for $k = m/2$ if $m$ is even, gives
$\mR (2 \pi g k / m) = (-1)^g \mI$, which corresponds to rotation of 180 degrees.
The subspace $\vspan\{(1,0)\}$ is invariant to this action, so that  $\mR (2 \pi g k / m)$ is not an irrep.
This representation can thus be reduced to an action on a one-dimensional subspace, represented by $(-1)^g$.
The case $k=0$ gives the trivial representation.



\subsection{SO(3): A Non-Abelian Lie Group}\label{app:so3-example}

The special orthogonal group $\text{SO}(3)$ on $3$ dimensions (rotation group), the $3\times 3$ orthogonal matrices with determinant $+1$, can also be analyzed within our theory. G-convolutional neural networks that are equivariant to $\text{SO}(3)$ rotations have become of high interest in the physical sciences and computer vision where the objects of interest often respect these symmetries~\citep{andersonCormorantCovariantMolecular2019, Cohen2018SphericalC, Esteves2018LearningSE, Kondor2018ClebschGordanNA}
The irreducible representations have the form $\mB_{k_m}$
where $\bm B_{k_m}$ are $( 2k_m +1) \times (2k_m + 1)$ block matrices, known as Wigner $D$-matrices~\citep{wignerGruppentheorieUndIhre1931}. The trivial irreps correspond to the one-dimensional irreps with $k=0$. Thus, the $\text{SO}(3)$-invariant classification capacity merely counts the number of trivial irreps which have $N_0 = \sum_{m} \delta_{k_m,0}$. The capacity is again $f(P,N_0)$.

\subsection{G-equivariant convolutional layers}\label{app:gcnn}
\subsubsection{Standard convolutional layers}\label{app:standard-cnn}
A convolutional layer consists of a set of $N$ $k\times k'$ filters $F_{i}$ that are convolved (technically cross-correlated) with a stack of $M$ $W \times L$ input tensors.
Here $M$ is the number of input channels and $N$ the number of output channels.
The convolution runs each filter (i.e. takes the dot product at all possible positions) over each of the $W \times L$ input tensors, and the result is averaged across the $M$ input channels to produce the output of one output channel.
In the positions where the filters approach the edges of the input tensor, different choices can be made about how to handle these edge conditions.
The standard choice is to pad the edges with some number of zeros depending on the desired shape of the output tensor and run the convolution out to the end of the padded image.
Another possible choice is to loop the edges of the input tensor together, so that the filter is applied to the other side of the input tensor as it runs off the edge.
This periodic boundary condition allows us to write the convolution formally in terms of group actions, and to apply our theory directly.
When convolutions are not periodic, the resulting capacity increases somewhat but still follows the $P/N_0$ scaling of the periodic convolutions (\Figref{fig:non-periodic-conv}).

For the random convolutional layers of \Figref{fig:capacity_plots}a, the input tensors are size $10 \times 10$ and the number of input channels are $M=3$, as for standard color images.
Each entry of these tensors is normally distributed with mean $0$.
The filters are also of size $10 \times 10$ with periodic boundary conditions, and are initialized according to a normal Xavier distribution with parameters that are the default for Pytorch 1.9.
The convolution is implemented via the Pytorch 1.9 implementation of Conv2d with padding\_mode=``circular'' and padding=0 in the case of periodic boundary conditions.
The bias term of the convolution is set to zero and the convolution is followed by a ReLU nonlinearity (blue line).
The resulting figures do not change appreciably for different choices of input tensor size, number of input channels, or size of filters (though note that the nonlinearity is essential for satisfying the general position condition of Theorem \ref{thm:thm-1}; otherwise, the capacity would be determined by the number of input channels rather than number of output channels).
The output of this convolution is then fed through a $2\times2$ max pooling layer (orange line in \Figref{fig:capacity_plots}a), provided by Pytorch 1.9's MaxPool2d.

The pretrained VGG-11 layers used in \Figref{fig:capacity_plots}b and \Figref{fig:non-periodic-conv} are taken from~\citet{kuangliu2021}.
The first convolutional block (blue line) consists of $3 \times 3$ pretrained filters applied to CIFAR-10 image tensors randomly selected from the validation set and normalized in the same way they are normalized during training (see~\citet{kuangliu2021} for details).
These images are of size $32 \times 32$ and have $M=3$ input channels, followed by a batch normalization layer in evaluation mode (fixed parameters), and then followed by a ReLU nonlinearity.
The boundary conditions of these convolutions are set to be periodic in \Figref{fig:capacity_plots}b and nonperiodic with a zero padding sizes of 1 in \Figref{fig:non-periodic-conv}, and the bias term is set to zero.
The batch normalization is an element-wise operation and so equivariant to the representations we consider -- thus this operation is not expected and is not observed to affect the perceptron capacity.
This convolutional block is then followed by a $2 \times 2$ max pooling layer (orange line).
Finally, another convolutional block of $3\times3$ filters, batch normalization, and ReLU nonlinearity are applied (green line).

The fraction of linearly separable dichotomies is measured empirically by using the scikit-learn LogisticRegression implementation of logistic regression, with a tolerance value of tol=1e-18 and an inverse regularization value of C=1e8.
The maximum number of iterations is set to 500.
An intercept (i.e. bias) is not used for this fit.

To formally prove that the fraction of separable dichotomies is $f(P,N)$ for standard periodic convolutional layers, first note that the convolution is equivariant with respect to cyclic permutation of the inputs and of the outputs.
The representation for cyclically permutating the output tensor can be written $\bigoplus_{k=1}^{N}\pi$ where $\pi$ is the representation that cyclically permutes the entries of $W \times L$ matrices.
Since each copy of $\pi$ contains one trivial irrep in its decomposition into a direct sum of irreps, the direct sum $\bigoplus_{k=1}^{N}\pi$ contains $N$ trivial irreps in its decomposition.
The final step to use Theorem \ref{thm:thm-1} is to argue that the centroids of the manifolds are in general position.
Since a nonlinearity (ReLU) is applied to the output of the convolution, and since there is no particular structure in the convolutional filters beyond possibly sparsity, we can generically expect this to be the case.

\begin{figure}
    \centering
    \includegraphics[width=.6\linewidth]{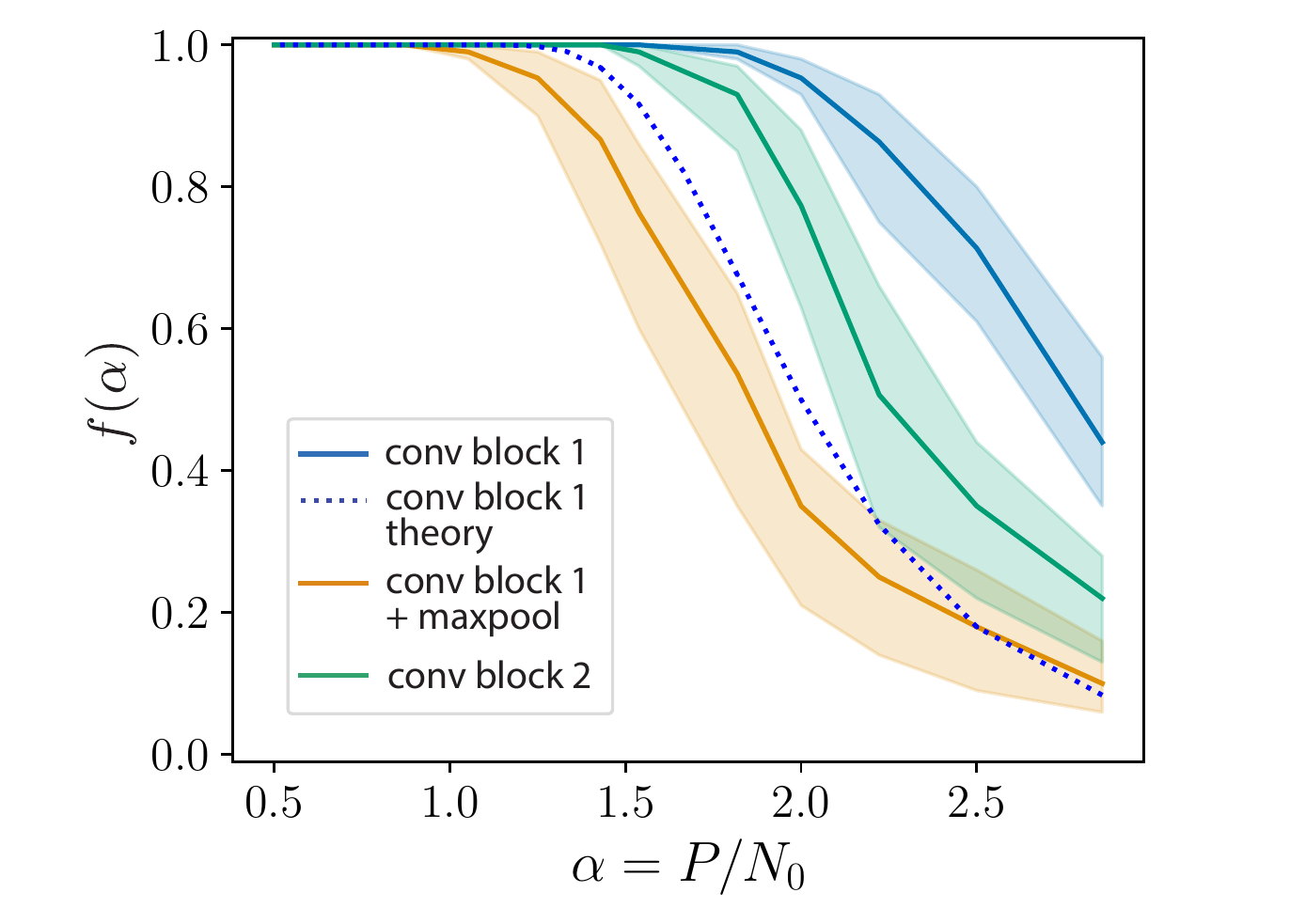}
    \caption{The fraction of realizable dichotomies of non-periodic convolutional layers is higher than periodic convolutional layers, but still obeys the same scaling. Details are exactly as in \Figref{fig:capacity_plots}b, but using non-periodic convolutions with a zero padding of size 1.
    Here the theory line refers to the theory for periodic convolutions.}
    \label{fig:non-periodic-conv}
\end{figure}

\subsubsection{Local Pooling}\label{app:pooling}
First we prove an extension of Lemma \ref{thm:lemma-1} to equivariant linear maps. This will be used to show that average pooling does not affect the capacity of the regular representation of $\cyc{m}$.

\begin{lemma}\label{thm:lemma-A1}
 Let $\pi$ be a representation of the group $G$ and suppose the matrix $\mP$ is equivariant with respect to the restriction of $\pi$ to a subgroup $H \subseteq G$, so that for all $h\in H$ $\mP \pi(h) = \rho(h) \mP$ for some representation $\rho$ of $H$. Let $R$ denote a set of representatives of $G/H$. Then we have the following equivalence.
\begin{align*}
    \exists \vw \  &\forall \mu \in [P] , g\in G : y^\mu \vw^\top \mP \pi(g) \bm r^\mu  > 0 
    \\
    &\iff \exists \vw \  \forall \mu \in [P] \ \forall g' \in R \ : \  y^\mu \vw^\top \mP \avg{ \pi (h) }_{h\in H} \pi(g') \vr^\mu > 0.
\end{align*}
\end{lemma}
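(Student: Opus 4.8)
The plan is to reproduce the structure of the proof of \Lemref{thm:lemma-1}, with two modifications: the group average is now taken only over the subgroup $H$, and the pooling map $\mP$ must be carried through the argument using its $H$-equivariance $\mP \pi(h) = \rho(h) \mP$. As before, the forward implication is a soft averaging argument that needs no special structure, while the reverse implication is the substantive direction, established by exhibiting an explicit separating vector built from $\avg{\rho}$.

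For the forward direction, I would assume some $\vw$ satisfies $y^\mu \vw^\top \mP \pi(g) \vr^\mu > 0$ for every $g \in G$ and $\mu \in [P]$. Fixing $g' \in R$ and $\mu$, I would pull the $H$-average through the linear maps $\vw^\top \mP$ and $\pi(g')$ to write $y^\mu \vw^\top \mP \avg{\pi(h)}_{h\in H} \pi(g') \vr^\mu = \avg{\, y^\mu \vw^\top \mP \pi(h g') \vr^\mu \,}_{h \in H}$, using only linearity and the homomorphism property $\pi(h)\pi(g') = \pi(hg')$. Since each $hg'$ lies in $G$, every term in the average is positive by hypothesis, so the average is positive; the same $\vw$ works. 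Note that this direction does not use equivariance of $\mP$ at all.

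For the reverse direction, I would assume $\vw$ satisfies $y^\mu \vw^\top \mP \avg{\pi(h)}_{h\in H} \pi(g') \vr^\mu > 0$ for all $\mu$ and all $g' \in R$. The first step is the intertwining identity $\mP \avg{\pi(h)}_{h\in H} = \avg{\rho(h)}_{h\in H}\, \mP = \avg{\rho}\mP$, which follows by moving $\mP$ inside the average and applying equivariance term by term. I would then set $\tilde\vw = \avg{\rho}^\top \vw$, in direct analogy with the choice $\tilde\vw = \avg{\pi}^\top \vw$ in \Lemref{thm:lemma-1}. Writing an arbitrary $g \in G$ in coset form and expanding $\tilde\vw^\top \mP \pi(g)$, the goal is to collapse the subgroup factor: using equivariance once more to turn a $\pi(h)$ into a $\rho(h)$ sitting next to $\avg{\rho}$, and then the right-invariance of the subgroup average $\avg{\rho}\rho(h) = \avg{\rho}$, one reduces $\tilde\vw^\top \mP \pi(g) \vr^\mu$ exactly to $\vw^\top \avg{\rho}\mP \pi(g')\vr^\mu$, which is positive by assumption.

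The main obstacle, and the point requiring care, is the sidedness of the coset decomposition. For the collapse step above to work, the subgroup element $h$ must end up adjacent to $\mP$ (so that equivariance can convert it to $\rho(h)$ and invariance can then absorb it), rather than adjacent to $\vr^\mu$, where it could not be removed. This forces the decomposition $g = h g'$ with $h \in H$ and $g' \in R$ running over representatives of the right cosets $H\backslash G$, i.e. $G = \bigsqcup_{g' \in R} Hg'$; the opposite convention $g = g'h$ would leave an unabsorbable $\pi(h)$ acting on $\vr^\mu$. The remaining ingredients are the two routine group-averaging identities --- that $\mP$ intertwines the $H$-average of $\pi$ with $\avg{\rho}$, and that $\avg{\rho}$ is invariant under right multiplication by $\rho(h)$ --- both immediate from equivariance and from invariance of the averaging measure on $H$.
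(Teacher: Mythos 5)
Your proof is correct and follows essentially the same route as the paper's: the same forward direction by averaging over $H$ with the same $\vw$, and the same reverse direction via the choice $\tilde\vw = \avg{\rho(h)}_{h\in H}^\top \vw$, the coset decomposition $g = hg'$, $H$-equivariance of $\mP$, and invariance of the averaging measure. Your remark on the sidedness of the decomposition (that $h$ must land adjacent to $\mP$, so $R$ should really index right cosets $H\backslash G$) is a sound clarification of a point the paper leaves implicit by writing $G/H$, though it is immaterial in the paper's abelian applications.
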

\begin{proof}
For the forward implication, we write the coset decomposition $g=hg'$ of $g$ and average over $H$ to find
\begin{align*}
    \forall g\in G : y^\mu \vw^\top \mP \pi(g) \bm r^\mu  > 0 & \iff \forall h\in H, g'\in R : y^\mu \vw^\top \mP \pi(h)\pi(g') \bm r^\mu  > 0 \\
    & \implies \forall g'\in R : y^\mu \vw^\top \mP \avg{\pi(h)}_{h\in H}\pi(g') \bm r^\mu  > 0.
\end{align*}

For the backward implication, suppose $y^{\mu}\vw^{\top} \mP \langle \pi(h) \rangle_{h \in H} \pi(g') \vr^{\mu}>0$ for all representatives $g' \in R$, and define $\tilde{\vw} = \avg{\rho(h)}_{h \in H}^{\top} \vw$. For any $g \in G$, take a coset decomposition $g = h g'$ for $h \in H$ and $g'\in R$.
We then have
\begin{align}
    y^\mu \tilde{\vw}^\top \mP \pi(g) \bm r^\mu &= y^\mu \tilde{\vw}^\top \mP \pi(h) \pi(g') \bm r^\mu \ \text{ (Coset decomposition)} \nonumber
    \\
    &= y^\mu \tilde{\vw}^\top \rho(h) \mP \pi(g') \bm r^\mu \ \text{($\mP$ is $H$-equivariant) } \nonumber
    \\
    &= y^\mu \vw^\top \left< \rho(h') \right>_{h' \in H} \rho(h) \mP \pi(g') \bm r^\mu \ \text{(Definition of $\tilde\vw$)} \nonumber
    \\
    &= y^\mu \vw^\top \left< \rho(h) \right>_{h \in H} \mP \pi(g') \bm r^\mu  \ \text{(Invariance of measure)} \nonumber
    \\
    &= y^\mu \vw^\top  \mP \left< \pi(h) \right>_{h \in H} \pi(g') \bm r^\mu \nonumber \ \text{($\mP$ is linear and $H$-equivariant) }
    \\
    &> 0 \  \text{(By assumption)}.
\end{align}
The implication follows.
\end{proof}

\begin{lemma}
 For the regular representation of $G = \cyc{D}$, a local average pooling over windows of size $k$ generates a matrix $\mP$ which is equivariant with respect to the subgroup $H = \cyc{D/k}$ with the property that
 \begin{align}
     \mP \avg{ \pi(h) }_{h \in H} = a \mP \avg{ \pi(g) }_{g \in G}
 \end{align}
 where $a>0$ is a positive constant.
\end{lemma}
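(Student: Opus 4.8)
The plan is to carry out both sides of the claimed identity in explicit coordinates and observe that they collapse onto the same rank-one matrix; in fact this will show the constant is $a=1$, which in particular is positive. I index the entries of a length-$D$ signal by $\{0,1,\dots,D-1\}$, so that the regular representation acts by cyclic shift, $(\pi(g)\vr)_a = r_{a-g \bmod D}$; the average-pooling matrix is $(\mP\vr)_j = \frac1k\sum_{l=0}^{k-1} r_{jk+l}$ for $j\in\{0,\dots,D/k-1\}$; and I identify $H=\cyc{D/k}$ with the subgroup $\{0,k,2k,\dots,(D/k-1)k\}\subseteq\cyc{D}$ of shifts by integer multiples of $k$.

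First I would record the equivariance underlying the statement: shifting the input by one window width $k$ cyclically permutes the pooling windows, so an index computation gives $\mP\pi(h)=\rho(h)\mP$ with $\rho$ the regular representation of $\cyc{D/k}$, matching the hypothesis of Lemma \ref{thm:lemma-A1}. Next I would evaluate the two averages. Because the $\pi(g)$ run over all cyclic shifts, $\avg{\pi(g)}_{g\in G}=\frac1D\vone_D\vone_D^\top$ is the projection onto the fixed line $\vspan\{\vone_D\}$. For the subgroup, $\avg{\pi(h)}_{h\in H}$ is the projection onto the $H$-fixed subspace of period-$k$ signals; concretely its action replaces each entry by the mean of the entries sharing its residue modulo $k$, so $(\avg{\pi(h)}_{h\in H}\vr)_a$ depends only on $a\bmod k$.

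The crux, and the step I expect to carry the proof, is applying $\mP$ to such a period-$k$ signal. Since each length-$k$ pooling window contains exactly one index from every residue class modulo $k$, the window average of a period-$k$ vector equals its global mean and is independent of the window. Combined with the explicit form of $\avg{\pi(h)}_{h\in H}$, this gives $\mP\avg{\pi(h)}_{h\in H}\vr=(\frac1D\sum_a r_a)\vone_{D/k}$. On the other side, $\avg{\pi(g)}_{g\in G}\vr=(\frac1D\sum_a r_a)\vone_D$ and $\mP\vone_D=\vone_{D/k}$, so $\mP\avg{\pi(g)}_{g\in G}\vr=(\frac1D\sum_a r_a)\vone_{D/k}$ as well. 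The two coincide, establishing the identity with $a=1$. The only real obstacle is bookkeeping: keeping the three index sets aligned --- pixels modulo $D$, windows modulo $D/k$, and residues modulo $k$ --- so that the ``one full period per window'' cancellation is transparent. Once that alignment is set up, both products manifestly map every input to a multiple of $\vone_{D/k}$ and the constant is read off directly.
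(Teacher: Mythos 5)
Your proposal is correct and follows essentially the same route as the paper's proof: establish $H$-equivariance of $\mP$, then show that both $\mP \avg{\pi(h)}_{h\in H}$ and $\mP \avg{\pi(g)}_{g\in G}$ collapse to (a positive multiple of) the rank-one matrix $\vone_{D/k}\vone_D^\top$, which the paper simply asserts and you verify by the explicit one-residue-per-window computation. Your version is slightly stronger in that it pins down the constant as $a=1$, but this is a refinement of the same argument rather than a different approach.
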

\begin{proof}
    First, note that the new pooled code is the regular representation of $H$ since shifts of size $mk$ in the original feature map corresponds to shifts of length $m$ in the pooled code. Thus $\mP$ is equivariant to $H = \cyc{D/k}$.
    Next we note
    the following two facts 
    \begin{align}
        \mP \left< \pi(h) \right>_{h\in H} = a' \bm 1_{D/k} \bm 1_{D}^\top
        \\
        \mP \left< \pi(h) \right>_{g\in G} = a'' \bm 1_{D/k} \bm 1_D^\top
    \end{align}
    $a'$ and $a''$ are positive constants and $\vone_D$ and $\vone_{D/k}$ are the $D$ and $D/k$ dimensional vector of all ones, respectively.
    Thus, we have that $\mP \avg{ \pi(h) }_{h\in H} = a\mP \avg{ \pi(g) }_{g \in G}$ where $a$ is a positive constant.
\end{proof}

\begin{theorem}\label{app:thm_avg_pool_capacity}
    The fraction of linearly separable dichotomies of a CNN pooling layer with $N$ filters after average pooling from feature maps with the size of the input image $W \times L$ to pooled feature maps of size $W / k \times L / k$ is $f(P,N)$, i.e. no capacity is lost due to local average pooling.
\end{theorem}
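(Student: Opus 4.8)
The plan is to chain the two preceding lemmas (the equivariant-map centroid reduction \Lemref{thm:lemma-A1}, and the immediately preceding lemma identifying $\mP \avg{\pi(h)}_{h\in H}$ with $a\,\mP \avg{\pi(g)}_{g\in G}$) and then close with Cover's count \Eqref{CoverThm}. I would work with the full $N$-filter representation $\pi^{\mathrm{full}} = \bigoplus_{i=1}^{N} \pi$, where $\pi$ is the regular representation of $G = \cyc{W}\times\cyc{L}$ acting on each $W\times L$ feature map, and $\mP$ is the block-diagonal average-pooling matrix (one identical block per filter) mapping onto feature maps of size $(W/k)\times(L/k)$. The pooled code is equivariant to the subgroup $H = \cyc{W/k}\times\cyc{L/k}$ of shifts by multiples of $k$, with cosets $G/H$ of size $k^2$; the immediately preceding lemma, although stated for $\cyc{D}$, applies verbatim here since the regular representation of the product group likewise has a one-dimensional fixed point subspace $\vspan\{\vone_{WL}\}$.

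First I would apply \Lemref{thm:lemma-A1} with this $H$, reducing separability on the full pooled orbits to separability of the $k^2 P$ points $\mP\avg{\pi(h)}_{h\in H}\pi(g')\vr^\mu$, ranging over representatives $g'$ of $G/H$ and $\mu\in[P]$. Then I would invoke the immediately preceding lemma to rewrite $\mP\avg{\pi(h)}_{h\in H} = a\,\mP\avg{\pi(g)}_{g\in G}$ for a positive constant $a$. The crucial collapse is that $\avg{\pi(g)}_{g\in G}\pi(g') = \avg{\pi(g)}_{g\in G}$ by right-invariance of the uniform measure on $G$ (the same reindexing used in \Lemref{thm:lemma-1}), so the $g'$ dependence vanishes entirely: all $k^2$ points in the orbit of $\vr^\mu$ collapse to the single centroid $a\,\mP\avg{\pi(g)}_{g\in G}\vr^\mu$. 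Hence the condition is equivalent to separating just the $P$ pooled centroids $\mP\avg{\pi(g)}_{g\in G}\vr^\mu$.

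The remaining task is the dimension count. For one filter, $\avg{\pi(g)}_{g\in G} = \frac{1}{|G|}\vone_{WL}\vone_{WL}^\top$ projects onto $\vspan\{\vone_{WL}\}$, and since averaging a constant feature map over pooling windows returns a constant, $\mP\vone_{WL} = \vone_{WL/k^2}$; thus $\mP\avg{\pi(g)}_{g\in G}$ has rank one with range $\vspan\{\vone_{WL/k^2}\}$. Stacking the $N$ identical blocks, $\mP\avg{\pi^{\mathrm{full}}(g)}_{g\in G}$ has rank exactly $N$, so the $P$ pooled centroids lie in an $N$-dimensional subspace — one trivial dimension preserved per filter, exactly as before pooling. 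Appealing to the general-position argument of \Thmref{thm:thm-1} (the preceding nonlinearity making the centroids generic within this subspace), \Eqref{CoverThm} then yields the fraction $f(P,N)$.

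I expect the main obstacle to be the justification that the pooled centroids are in general position within their $N$-dimensional subspace. The collapse step is a clean algebraic consequence of right-invariance and the preceding lemma, but general position is a genericity condition that is not automatic: it must be argued from the absence of special structure in the post-nonlinearity anchor points $\vr^\mu$, mirroring the corresponding step in \Thmref{thm:thm-1}.
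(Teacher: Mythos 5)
Your proof is correct and follows essentially the same route as the paper's: it chains \Lemref{thm:lemma-A1} with the lemma identifying $\mP\avg{\pi(h)}_{h\in H}$ with $a\,\mP\avg{\pi(g)}_{g\in G}$, collapses the coset-representative dependence via right-invariance of the uniform measure, and closes with a rank count under the general-position assumption and Cover's formula. The only cosmetic difference is that you compute the rank of $\mP\avg{\pi(g)}_{g\in G}$ directly (one trivial dimension per filter, via $\mP\vone_{WL}=\vone_{WL/k^2}$), whereas the paper equivalently counts the rank of $\avg{\rho(h)}_{h\in H}$ for the direct sum of $N$ regular representations of $H$.
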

\begin{proof}
    The CNN layer before pooling is a regular representation $\pi$ of the full group $G = \cyc{W} \times \cyc{L}$,
    applied to each of the $M$ input channels via a direct a sum $\bigoplus_{j=1}^{M} \pi$. The layer after pooling is a regular representation $\rho$ of the subgroup $H = \cyc{W/k} \times \cyc{L/k}$,
    also applied to the output channels via a direct sum $\bigoplus_{j=1}^{N} \rho$.
    Let $R$ be a set of representatives of $G/H$.
    Since $\mP$ is equivariant to $\pi$ and $\rho$ over $H$
    we have by the previous two lemmas that
    \begin{align*}
        \forall g\in G : y^\mu \vw^\top \mP  \pi(g) \bm r^\mu > 0 
        & \iff \forall  g'\in R : y^{\mu } \vw^\top \mP \avg{\pi(h)}_{h \in H} \pi(g') \bm r^\mu > 0 \\
        & \iff \forall g'\in R : y^{\mu } \vw^\top \mP \avg{ \pi(g)}_{g \in G} \pi(g') \bm r^\mu > 0 \\
        & \iff y^{\mu } \vw^\top \mP \avg{ \pi(g)}_{g \in G} \bm r^\mu > 0 \\
        & \iff y^{\mu } \vw^\top \mP \avg{ \pi(h)}_{h \in H} \bm r^\mu > 0 \\
        & \iff y^\mu \vw^\top \avg{ \rho(h)}_{h \in H} \mP \bm r^\mu > 0
    \end{align*}
    Thus the capacity is determined by the rank of
    $\avg{\rho(h)}_{h\in H}$, assuming the $\avg{\rho(h)}_{h\in H} \mP \vr^{\mu}$ are in general position.
    Since each $\rho$ is a copy of the regular representation for $H$, the rank of $\avg{ \bigoplus_{j=1}^{N} \rho(h) }_{h\in H}$ is merely $N$.
    Thus the fraction of linearly separable dichotomies is $f(P,N)$, the same as the capacity before pooling.
\end{proof}

Now we prove local pooling operations in a standard CNN preserve a regular representation of a subgroup of the cyclic group.
\begin{lemma}\label{app:two_dimensional_pool}
 Suppose $P$ is a local pooling operation on two-dimensional signals (CNN feature maps), and that $\pi$ is the regular representation of a group $G = \cyc{W} \times \cyc{L}$ on code $\bm a(\vx)$. A pooled feature map $\bm r = \mathcal P(\bm a)$ which acts on $k \times k$ windows of $\bm a$ is a regular representation of the subgroup $H=\cyc{W / k } \times \cyc{L / k}$.
\end{lemma}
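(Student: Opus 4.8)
The plan is to introduce explicit coordinates on both feature maps and show directly that a shift by a multiple of $k$ permutes the pooling windows cyclically, which is exactly the defining action of the regular representation of $H$. First I would index the pixels of the $W \times L$ feature map by pairs $(i,j)$ with $i \in \cyc{W}$, $j \in \cyc{L}$, so that the regular representation $\pi$ of $G = \cyc{W} \times \cyc{L}$ acts by $(\pi(s,t)\bm a)_{i,j} = a_{i-s,\,j-t}$ (indices taken modulo $W$ and $L$, matching the convention of \Secref{sec:ex-cyclic}). The $k\times k$ pooling partitions the grid into windows indexed by $(p,q) \in \cyc{W/k} \times \cyc{L/k}$, where window $(p,q)$ consists of the pixels $\{(pk+u,\,qk+v) : 0 \le u,v < k\}$, and the pooled code is $r_{p,q} = \mathcal P_{\mathrm{win}}(\{a_{pk+u,\,qk+v}\}_{u,v})$ for a fixed windowwise aggregation $\mathcal P_{\mathrm{win}}$ applied identically to every window.

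The key observation is that the subgroup $H$ embeds into $G$ as the shifts by multiples of $k$: the element $(s',t') \in \cyc{W/k} \times \cyc{L/k}$ corresponds to $(ks',kt') \in G$. Because $k$ divides both $W$ and $L$, such a shift respects the window partition under periodic boundaries, i.e. it never splits a window. Concretely, for $h = (ks',kt')$ I would compute the entries of $\pi(h)\bm a$ lying in window $(p,q)$, namely $(\pi(h)\bm a)_{pk+u,\,qk+v} = a_{(p-s')k+u,\,(q-t')k+v}$, and observe that these are exactly the entries of window $(p-s',\,q-t')$ of $\bm a$. Since $\mathcal P_{\mathrm{win}}$ is the same on every window, this yields $\mathcal P(\pi(h)\bm a)_{p,q} = r_{p-s',\,q-t'}$.

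This identity says precisely that $\mathcal P(\pi(h)\bm a) = \rho(h)\,\mathcal P(\bm a)$, where $\rho(s',t')$ cyclically shifts the entries of the $(W/k) \times (L/k)$ array of pooled values by $(s',t')$. But the cyclic-shift action on the index set $\cyc{W/k} \times \cyc{L/k}$ is by definition the regular representation of $H$, so $\bm r = \mathcal P(\bm a)$ carries the regular representation of $H$, which is the claim.

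The main point to handle carefully, and the only real obstacle, is the bookkeeping at the periodic boundary: I must verify that shifting by a multiple of $k$ genuinely permutes whole windows rather than mixing pixels from adjacent windows, and that the induced permutation of windows is the cyclic one. This is exactly where the divisibility $k \mid W$ and $k \mid L$ is essential, and it is what guarantees $H$ is a well-defined subgroup with $|G/H| = k^2$ cosets. Notably, the argument never invokes linearity of $\mathcal P_{\mathrm{win}}$, so it applies verbatim to max pooling or any other windowwise aggregation, which is why the conclusion holds in the nonlinear case relevant to \Secref{sec:local_max_pooling}.
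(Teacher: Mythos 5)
Your proof is correct and takes essentially the same route as the paper's own: both verify, index by index, that a shift by a multiple of $k$ maps whole $k \times k$ pooling windows onto whole windows (using $k \mid W$ and $k \mid L$ at the periodic boundary), so the pooled array transforms by cyclic shifts, i.e.\ the regular representation of $H = \cyc{W/k} \times \cyc{L/k}$. The only differences are cosmetic — you act with $\pi$ directly on the code with a subtractive shift convention, while the paper phrases the same computation through the input ($r_{ij}(\pi^{\vx}_{nk,mk}\vx) = r_{i+n,j+m}(\vx)$) — and both arguments use only that the aggregation is applied identically to each window, hence cover max pooling and other nonlinear poolings.
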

\begin{proof}
Suppose an equivariant feature map $\bm a(\bm x) \in\mathbb{R}^{W \times L \times N}$ has corresponding regular representation of the group $G = \cyc{W} \times \cyc{L}$ for each of the $N$ filters. Consider any local pooling operation $\mathcal P\left( \cdot \right)$ (such as average or maximum) which acts on $k \times k$ patches where $k$ divides both $W$ and $L$.
\begin{align}
    r_{ij,h}(x) = \mathcal P\left( \{ a_{i',j',h}(x) \ | \ i' \in [i k,(i+1) k], j' \in [j k,(j+1)k]  \}  \right)
\end{align}

Note that for $k>1$, $\bm r(x)$ is no longer equivariant to $G$ since the representation does not satisfy the homomorphism property for shifts with length $\ell$ not divisible by $k$. However, the new code is equivariant to a subgroup $H = \cyc{W/k} \times \cyc{L/k}$, namely vertical and horizontal shifts with length divisible by $k$. Let $\bm\pi^{\vx}_{nk,mk}$ represent a vertical shift of the image $\bm x$ by $nk$ pixels and horizontal shift by $mk$ pixels. Note that $a_{ij,h}(\bm\pi^{\vx}_{nk,mk} \vx) = a_{i+nk,j+mk}(\vx)$ since $\bm a(\vx)$ is equivariant.  Then, the $h$-th pooled feature map transforms as 
\begin{align*}
    r_{ij,h}(\bm\pi^{\vx}_{nk,mk} \bm x) &= \mathcal P\left( \{   a_{i',j',h}( \bm\pi^{\vx}_{nk,mk} x) \ | \ i' \in [ik,(i+1)k], j' \in [jk,(j+1)k]  \} \right) 
    \\
    &= \mathcal P\left( \{   a_{i'+nk,j'+mk,h}(\bm x) \ | \ i' \in [ik,(i+1)k], j' \in [jk,(j+1)k]  \} \right) \\
    & \qquad \quad \text{($\bm a$ is Equivariant to $\pi ^{x}$)} 
    \\
    &= \mathcal P\left( \{   a_{i',j',h}(\bm x) \ | \ i' \in [(n+i)k,(n+i+1)k], j' \in [(m+j)k,(m+j+1)k]  \} \right),
    \\ & \qquad \quad \text{($k$ divides $W,H$)} 
    \\
    &= r_{i+n,j+m,h}(\bm x) \ , \ \text{(Definition of $\bm r$)}
\end{align*}
We thus find that the code is a regular representation of the subgroup of the cyclic translations $H = \{(nk,mk)\}_{n \in [H/k], m\in [W/k]}$. This new group $G'$ has dimension $|H| = \frac{1}{k^2} |G|$. 
\end{proof}

\subsubsection{Lower Bound and Upper Bound on Capacity for Nonlinear Pooling}\label{app:lower_and_upper_bound_pool}

\begin{theorem}\label{app:lower_bound_pooling}
    Suppose a code which is equivariant to a finite group $G$ is pooled to a new code which is equivariant to a finite subgroup $H \subseteq G$. Suppose the number of trivial dimensions in the original $G$-equivariant code is $N_0$. Then the fraction of linearly separable dichotomies on the $G$-invariant problem for the pooled code is at least $f(P, \floor{N_0 / k})$ where $k = |G / H|$. Similarly the fraction is at most $f(P,N_0)$.
\end{theorem}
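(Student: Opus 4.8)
My plan is to reduce the nonlinear pooling problem to the linear separability of finitely many centroids, and then bound the number of group-consistent separable dichotomies from both sides. Starting from \Eqref{eq:local_pooling}, I would apply \Lemref{thm:lemma-1} to the subgroup $H$ together with its pooled representation $\rho$: for each fixed coset representative $j$, the set $\{\rho(h)\mathcal P(\pi(j)\vr^\mu) : h\in H\}$ is a single $\rho$-orbit, so separating all these orbits with labels $y^\mu$ is equivalent to separating their centroids $\vz^{\mu,j}\equiv\avg{\rho}\mathcal P(\pi(j)\vr^\mu)$. This yields $kP$ points lying in the fixed-point subspace $V_0^\rho=\range(\avg{\rho})$, organized into $P$ families of $k$ points that must share a common label. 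By Lemma~\ref{app:two_dimensional_pool} the pooled code carries a regular representation of $H$, so $\dim V_0^\rho$ equals the number of trivial irreps, namely $N_0$.

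For the upper bound, I would average \Eqref{eq:local_pooling} over $h\in H$ at the single coset $j=0$: any separating $\vw$ must then satisfy $y^\mu\vw^\top\avg{\rho}\mathcal P(\vr^\mu)>0$, so $\vw$ linearly separates the $P$ points $\avg{\rho}\mathcal P(\vr^\mu)$ in $V_0^\rho\cong\sR^{N_0}$. Every separable group dichotomy therefore maps injectively to a separable dichotomy of these $P$ points, whose number is at most $2^P f(P,N_0)$ by \Eqref{CoverThm} (general position maximizes the count). Dividing by $2^P$ gives $f\le f(P,N_0)$.

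For the lower bound I would extend Cover's region-counting recursion to grouped points. Writing $C_k(P,d)$ for the number of group-consistent homogeneously separable dichotomies of $P$ groups of $k$ generic points in $\sR^d$, and $C(P,m)=2^P f(P,m)$ for Cover's count, the goal is $C_k(P,N_0)\ge C(P,\floor{N_0/k})$. The geometric observation is that a group can be made \emph{ambiguous} (assignable to either class) exactly when the separating hyperplane is orthogonal to the $k$-dimensional span of that group's centroids; imposing this constraint removes $k$ dimensions at once, rather than the single dimension removed in Cover's original one-point step. This suggests the recursion $C_k(P+1,d)\ge C_k(P,d)+C_k(P,d-k)$. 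Re-indexing the ambient dimension in units of $k$ turns this into Cover's own recursion $C(P+1,m)=C(P,m)+C(P,m-1)$ with $m=\floor{d/k}$, and a term-by-term induction, with base case $C_k(1,d)=2$ for $d\ge k$ matching $C(1,1)=2$, yields $C_k(P,N_0)\ge C(P,\floor{N_0/k})$, i.e. a fraction at least $f(P,\floor{N_0/k})$.

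The hard part is justifying the grouped recursion as a genuine lower bound. Unlike Cover's single-point step, where every realized dichotomy of the first $P$ points extends to the new point with at least one label, a group-consistent labeling of the first $P$ groups need not admit any consistent extension to a newly added group: the $k$ scores $\vw^\top\vz^{P+1,j}$ may carry mixed signs for every admissible $\vw$. I would confirm on small interleaved configurations that the count genuinely drops from $C(P,N_0)$ toward the claimed $C(P,\floor{N_0/k})$, so this effect is real and is precisely what the $\floor{\cdot/k}$ encodes. The crux is to show these extension failures are compensated by the ambiguous (boundary) contributions counted by $C_k(P,d-k)$; I would handle this by carrying a margin/perturbation argument through the induction, restricting attention to separators lying in the orthogonal complement of the within-group difference vectors of the groups designated ambiguous, so that those groups collapse to single effective points and Cover's clean counting applies in the surviving $\floor{N_0/k}$-dimensional subspace.
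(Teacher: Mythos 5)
Your argument follows the paper's route step for step: the Lemma~\ref{thm:lemma-1} averaging trick applied to $H$ and $\rho$ reduces the problem to the $kP$ centroids $\avg{\rho}\mathcal{P}(\pi(j)\vr^\mu)$; the upper bound comes from averaging at a single coset, so that any separator must also separate the $P$ points $\avg{\rho}\mathcal{P}(\vr^\mu)$ in an $N_0$-dimensional space (this is the main text's version of the upper bound, stated more explicitly than the appendix's closing remark); and your lower bound is exactly the paper's grouped recursion $C_k(P+1,d)\ge C_k(P,d)+C_k(P,d-k)$, obtained from separators orthogonal to the span of the new group's $k$ points plus an infinitesimal perturbation.

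The crux you flag is a genuine gap, and you should know the paper's own proof has it too, unacknowledged: the paper hedges only about the second term (``sufficient but not necessary'') while silently assuming the baseline term, i.e.\ that every separable dichotomy of the first $P$ groups admits at least one consistent extension to the newly added group. That is precisely the mixed-sign failure you describe, and your proposed repair does not remove it: restricting separators to collapse the groups designated ambiguous only certifies the $C_k(P,d-k)$ term, leaving the baseline term untouched (and collapsing \emph{all} groups costs $(k-1)$ dimensions per group, which becomes vacuous once $P(k-1)\ge N_0$). Worse, no repair at this level of generality can exist, because the recursion and the claimed bound are false for generic grouped configurations. Take $k=2$, $N_0=2$, $P=2$, $0<\eps<1$, group $1=\{(1,\eps),(-1,\eps)\}$, group $2=\{(\eps,1),(\eps,-1)\}$. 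For $y=(+,+)$ the constraints force $\eps w_2>|w_1|$ and $\eps w_1>|w_2|$, hence $\eps^2>1$; for $y=(+,-)$ they force $\eps w_2>|w_1|$ and $-\eps w_1>|w_2|$, equally impossible; the remaining two dichotomies follow by $\vw\mapsto-\vw$. So \emph{zero} group dichotomies are separable, although the claimed lower bound is the fraction $f(2,1)=1/2$, i.e.\ two of the four; and the recursion already fails at $P=1\to 2$, since $C_k(1,2)=2$ and $C_k(1,0)=0$ predict $C_k(2,2)\ge 2$ while in fact $C_k(2,2)=0$. This configuration is stable under perturbation, and the theorem's hypotheses constrain $\mathcal{P}$ only through $H$-equivariance of the pooled code, so nothing stated excludes it. A correct proof must therefore invoke structure of the pooled code beyond genericity of the $kP$ centroids --- for instance nonnegativity of max-pooled ReLU features, which rules out configurations like the one above --- and neither your argument nor the paper's does so.
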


\begin{proof}
The pooled code, by assumption, has the property $\mathcal P(\pi(h g') \vr) = \rho(h) \mathcal P\left( \pi(g')\vr \right)$ for any $h \in H$ and $g' \in R$, where $R$ is a set of representatives of $G/H$.
The $G$-invariant separability condition amounts to the proposition
\begin{align}
    \exists \vw &\forall \mu \in [P] \ \forall h \in H, g' \in R : y^\mu \vw^\top \rho(h) \mathcal P( \pi(g') \vr^\mu ) > 0 
    \\
    &\iff \exists \vw \forall \mu \in [P] \ \forall g' \in R : y^\mu \vw^\top \left< \rho(h) \right>_{h \in H} \mathcal P( \pi(g') \vr^\mu ) > 0;
\end{align}
in other words, a solution on the right hand side affords a solution over all of the manifolds generated in the input space.
We see that, this requires considering if this particular dichotomy is linearly separble on the $P k$ anchor points $\left< \rho(h) \right>_{h\in H} \mathcal P(\pi(g') \vr^\mu)$. The simplest strategy to obtain an upper bound is to consider what happens when a single new manifold is added. We see that when a single new base point $\vr$ is added it corresponds to $k$ new points in the orbit $\left<\rho\right>\mathcal P(\pi(g') \vr)$ for all $g' \in R$. Suppose that $\left< \rho(h) \right>_{h \in H}$ has rank $N_0$. Let $C(P,N_0)$ represent the number of linearly separable dichotomies for $P$ $G$-orbits in $N_0$ trivial dimensions. Upon the addition of the $k$ new points ($P \to P+1$), we find that some of the pre-existing separable dichotomies give a new separable dichotomy. This can be guaranteed to occur when a $\vw$ separates the old dichotomy and has $\vw^{\top} \left< \rho \right> \cdot \mathcal P(\pi(g') \vr) = 0$ for the new anchor point $\vr$ (but this condition is not \textit{necessary} for a new dichotomy to be separable). This condition means that the original dichotomy is separable in the $N_0 - k$ dimensional subspace $\{\vw : \vw \cdot \mathcal P(\pi(g') \vr) = 0 \}$. By making infinitesimal adjustment to this $\vw$ the correct label on this new orbit can be achieved without altering the labels on any other dichotomy. Since this argument gives a sufficient but not necessary condition to generate a new dichotomy, we obtain the following inequality
\begin{align}
    C(P+1,N_0) \geq C(P,N_0) + C(P,N_0-k) .
\end{align}
Solving this recursion gives the capacity $f_{Pooled}(P,N_0) \geq f_{Cover}(P, \floor{N_0/k})$.
The greatest capacity occurs in the special case where $\left<\rho \right> \mathcal P(\pi(g') \vr) = \left< \rho \right> \mathcal P(\vr)$. In this case, the usual counting theorem applies giving a fraction of separable dichotomies of $f(P,N_0)$. This is achieved, for instance in average pooling as we showed in Theorem \ref{app:thm_avg_pool_capacity}.
\end{proof}

\subsubsection{Direct sum equivariant
convolutional layers}\label{app:direct-sum-cnn}
Here we describe how to build a convolutional layer architecture that is equivariant with respect to the regular representation in the input space and the direct sum representations introduced in \Secref{sec:ex-direct-sum} in the output space.
For the following we assume that $m_1$ and $m_2$ are coprime, though see the footnote in \Secref{sec:ex-direct-sum} for a discussion of how to loosen this requirement.

The input data is a $W \times L \times M$ tensor where $M$ is the number of input channels.
The first step is to simply take the output of a standard convolution (in our simulations we also apply a ReLU nonlinearity) applied to this input with periodic boundary conditions, resulting in a $W \times L \times N$ tensor where $N$ is the number of output channels.
The next step is to, for each of the output channels, take an average (or maximum) between entries spaced $m_1$ entries apart horizontally or vertically in the matrix, resulting in an $m_1 \times m_1$ matrix.
In our simulations we took averages rather than maximums.
This is repeated for the other number $m_2$, resulting in an $m_2 \times m_2$ matrix.
This is repeated for every output channel, resulting in $N$ matrices of size $m_1 \times m_1$ and $N$ matrices of size $m_2 \times m_2$.
Finally, the resulting matrices are flattened and appended into an $(m_1^2+m_2^2) \times N$ matrix, and the result is passed through a nonlinearity (ReLU).

As the input tensor is cyclically permuted according to a regular representation $\pi$ of $\cyc{m_1 m_2}$, the output of this equivariant convolutional layer permutes according to the representation $\pi^{(1)} \oplus \pi^{(2)}$ where $\pi^{(1)}$ is the regular representation of $\cyc{m_1}$ and $\pi^{(2)}$ is the regular representation of $\cyc{m_2}$.

The proof that the fraction of separable dichotomies is given by $f(P, 2N)$ follows the same proof as for the standard periodic convolutions in Appendix \ref{app:standard-cnn}.
Instead of a direct sum $\bigoplus_{k=1}^{N}\pi$ we get a direct sum $\bigoplus_{k=1}^{N}(\pi^{(1)} \oplus \pi^{(2)})$.
Each of the $\pi^{(1)} \oplus \pi^{(2)}$ contain two trivial irreps in its decomposition, so that the final fraction is $f(P, 2N)$.

\subsubsection{Induced representations}\label{app:induced-reps}
First we state the definition of an induced representation.
Let $H$ be a subgroup of a finite group $G$ and let $\rho : H \to \gl (W)$ be a representation of $H$.
Let $V$ be the vector space of functions $f : G \to W$ such that $f(gh) = \rho (h) f(g)$ for all $h \in H$ and $g \in G$.
We now define the induced representation $\pi : G \to \gl (V)$ to be the representation which satisfies $(\pi (g') f) (g) = f (gg')$.

For intuition, note that every element of $g$ can be written $g = rh$ where $r$ is a representative for a coset in $G/H$ and $h \in H$.
This is because the cosets $G/H$ partition $G$ and the action of $H$ stays within a coset; hence $r$ selects out the coset, and $h$ goes to the desired element of the coset: $g = rh$.
With this decomposition, the action of $\pi$ is then $\pi (rh) f (g) = \rho (h) f (gr)$.
Hence the $r$ component under $\pi$ has the effect of permuting to the new coset that $gr$ belongs in, and the $h$ component under $\pi$ then has the effect $\rho (h)$ on the resulting vector $f(gr)$ that we originally specified.
This is the most natural way to get a representation of $G$ from a representation of $H$.
In the case of finite groups, one can think of the $r$ component as permuting a set of isomorphic copies of $V$, each copy corresponding to a different coset.

To compute the capacity of induced representations, and so prove Proposition \ref{thm:induced}, we use Frobenius reciprocity of characters.
Recall that the character $\theta$ of a representation $\pi: G \to \gl (V)$ is the map $\theta: G \to \sC$ induced by the trace: $\theta (g) = \Tr (\pi (g))$.
Now let $\theta$ be the character of $\rho : H \to \gl (V)$ and let $\theta^G$ be the character of the induced representation.
Then $\langle \theta^G(g)\rangle_{g\in G} = \langle \theta(h)\rangle_{h\in H}$ by Frobenius reciprocity of characters~\citep{Mackey1970}.
The average of the character is the number of trivial representations contained in the decomposition of the representation (see ~\cite{serreLinearRepresentationsFinite2014}).
Hence the capacity of the induced representation is equal to the capacity of $\rho$.
The existence of extensions beyond finite groups is not clear to the authors, but we welcome information if such exists.
\end{document}